\documentclass[10pt]{article} 

\newcommand{\chart}{\phi}

\usepackage[colorlinks=true, linkcolor=blue, citecolor=blue, urlcolor=black]{hyperref}
\usepackage{url}
\usepackage{lipsum}
\usepackage{dsfont}

\usepackage{wrapfig}
\usepackage{sidecap}
\usepackage[export]{adjustbox}
\usepackage{float}

\usepackage{microtype}
\usepackage{graphicx}
\usepackage{subfigure}
\usepackage{booktabs} 

\usepackage{hyperref}

\usepackage{amsmath}
\usepackage{amssymb}
\usepackage{mathtools}
\usepackage{amsthm}

\usepackage[capitalize,noabbrev]{cleveref}

\usepackage[
  top=0.8in,
  bottom=0.75in,
  left=0.7in,
  right=0.7in
]{geometry}

\usepackage[textsize=tiny]{todonotes}

\usepackage{hyperref}
\usepackage[
  backend=biber,
  style=authoryear-comp,
  natbib=true
]{biblatex}

\DeclareFieldFormat{citehyperref}{%
  \DeclareFieldAlias{bibhyperref}{noformat}
  \bibhyperref{#1}}

\DeclareCiteCommand{\parencite}[\mkbibparens]
  {\usebibmacro{prenote}}
  {\usebibmacro{citeindex}%
   \printtext[citehyperref]{\usebibmacro{cite}}}
  {\multicitedelim}
  {\usebibmacro{postnote}}

\DeclareCiteCommand{\textcite}
  {\boolfalse{cbx:parens}}
  {\usebibmacro{citeindex}%
   \printtext[citehyperref]{\usebibmacro{textcite}}}
  {\ifbool{cbx:parens}
     {\bibcloseparen\global\boolfalse{cbx:parens}}
     {}%
   \multicitedelim}
  {\usebibmacro{textcite:postnote}}

\usepackage{stfloats}
\usepackage{caption}
\usepackage{amsthm}
\usepackage{amsmath}
\usepackage{extpfeil}
\usepackage{enumitem}
\usepackage{soul}

\usepackage{titlesec}

\titleformat{\section}
  {\normalfont\large\bfseries} 
  {\thesection}                
  {1em}                        
  {}                           
  [\titlerule]

\titleformat{\subsection}
  {\normalfont\normalsize\bfseries} 
  {}                                 
  {0pt}                              
  {}

\titlespacing*{\subsection}
  {0pt}   
  {1ex}   
  {-3pt} 

\titlespacing*{\section}
  {0pt}   
  {1ex}   
  {1ex} 

\usepackage[normalem]{ulem}

\newtheoremstyle{iclrtheorem}
  {3pt}{3pt}               
  {\itshape}{}             
  {}               
  {.}                      
  {.5em}                   
  {\underline{\scshape\thmname{#1}\thmnumber{ #2}}\thmnote{ (#3)}}

\newtheoremstyle{iclrdefinition}
  {3pt}{3pt}
  {\normalfont}{} 
  {}
  {.}
  {.5em}
  {\underline{\scshape\thmname{#1}\thmnumber{ #2}}\thmnote{ (#3)}}
  
\theoremstyle{iclrtheorem}
\newtheorem{theorem}{Theorem}[section]

\newtheorem{proposition}[theorem]{Proposition}

\theoremstyle{iclrdefinition}
\newtheorem{definition}[theorem]{Definition}

\newcommand{\sr}{\mathrm{SR}}
\newcommand{\rank}{\mathrm{rank}}
\newcommand{\msa}{\mathrm{MSA}}

\allowdisplaybreaks

\usepackage{fancyhdr}
\usepackage{lastpage}

\renewcommand{\headrulewidth}{0.4pt}

\renewcommand{\footrulewidth}{0.4pt}

\fancypagestyle{firstpage}{
  \fancyhf{} 

  \renewcommand{\headrulewidth}{0.4pt}

  \fancyfoot[C]{\thepage\ / \pageref*{LastPage}}
  \renewcommand{\footrulewidth}{0.4pt}
}

\begin{document}

\setlength{\abovedisplayskip}{5pt}
\setlength{\belowdisplayskip}{5pt}
\setlength{\abovedisplayshortskip}{2pt}
\setlength{\belowdisplayshortskip}{2pt}

\setlength{\parindent}{0pt}
\setlength{\parskip}{6pt}

\thispagestyle{firstpage}

\twocolumn[

\vspace{-2em}

\begin{flushleft}
{\Large
\parbox{0.8\linewidth}{
\textbf{Geometry-aware similarity metrics\\
for neural representations on\\
Riemannian and statistical manifolds
}
}}
\vspace{1em}

{\normalsize \textbf{N Alex Cayco Gajic}\textsuperscript{1,$\dagger$}, 
\textbf{Arthur Pellegrino}\textsuperscript{2,3,$\dagger$}}

\vspace{0.3em}
{\footnotesize\textsuperscript{$\dagger$} Equal contribution. Author order determined randomly.}

{\footnotesize
\begin{enumerate}[leftmargin=*, itemindent=0pt]
\item \textit{Département d'Etudes Cognitives, École Normale Supérieure -- PSL}
\item \textit{Gatsby Unit, University College London}
\item \textit{Département d'Informatique, École Normale Supérieure -- PSL}
\end{enumerate}
}

\vspace{-2pt}
{\footnotesize Correspondence to: 
\texttt{arthur.pellegrino@ens.fr}}
\end{flushleft}

\vspace{0.5em}

\begin{center}
    \vspace{-10pt}

    {\normalsize \bf Abstract}

    \vspace{5pt}

\begin{minipage}{0.75\textwidth}
\noindent
Similarity measures are widely used to interpret the representational geometries used by neural networks to solve tasks. Yet, because existing methods compare the \emph{extrinsic} geometry of representations in state space, rather than their \emph{intrinsic} geometry, they may fail to capture subtle yet crucial distinctions between fundamentally different neural network solutions. Here, we introduce metric similarity analysis (MSA), a novel method which leverages tools from Riemannian geometry to compare the intrinsic geometry of neural representations under the manifold hypothesis. We show that MSA can be used to i) disentangle features of neural computations in deep networks with different learning regimes, ii) compare nonlinear dynamics, and iii) investigate diffusion models. Hence, we introduce a mathematically grounded and broadly applicable framework to understand the mechanisms behind neural computations by comparing their intrinsic geometries.
\end{minipage}
\end{center}

~\vspace{10pt}
]

\section{Introduction}

Neural networks have historically been seen as ``black-box'' systems whose internal computations are inherently opaque. However, recent progress in mechanistic interpretability has led to broad efforts to reverse-engineer neural networks by analysing their weights and activations. These findings have revealed that subtle variations in initialization \citep{li2015convergent,wang2018towards}, architecture \citep{maheswaranathan2019universality}, or learning regime \citep{jacot2018neural,chizat2019lazy,saxe2019mathematical,paccolat2021geometric,woodworth2020kernel} can produce different learned representations in otherwise similarly performant models. Importantly, such representational geometry has been shown to determine generalization properties \citep{woodworth2020kernel,johnston2023abstract,li2024representations,shang2025unraveling,choufeature}. Understanding the internal mechanisms underlying a particular neural computation thus requires a principled way to contrast networks with potentially different architectures.

Towards this end, a multitude of similarity measures have been proposed to compare the hidden layer activations of neural networks \citep{klabunde2025similarity}. Widely used methods include linear regression, Procrustes, centred kernel alignment (CKA), canonical correlation analysis (CCA), representational similarity analysis (RSA), and their variants \citep{kriegeskorte2008representational,raghu2017svcca,kornblith2019similarity,grave2019unsupervised,williams2021generalized,boix2022gulp,harvey2024representational}. The core idea behind these methods is to align or correlate the representational geometries of hidden layer activations as they are embedded in state-space, to measure their similarity. 

By focusing on the state-space geometries, these methods do not explicitly leverage the ``manifold hypothesis": the widely accepted view that input data lie on an intrinsically low-dimensional manifold \citep{tenenbaum2000global}. An alternate approach instead examines neural network computations in terms of how they transform this input manifold. For example, successive layers in a deep network progressively warp the input manifold to extract task-relevant features. Such warping can be studied formally using tools from Riemannian geometry, both in neural networks \citep{poole2016exponential,hauser2017principles,kaul2019riemannian,brandon2025emergent}, as well as in dynamical systems models \citep{pellegrino2025rnns}, where the input manifold is instead warped through time. 

In this Riemannian view, the extrinsic geometry of a representation (e.g., its embedding) reflects less about internal computations than its intrinsic geometry, which characterises the pairwise relationships between points along the manifold. For example, a two-dimensional input may be embedded in ambient space as a flat plane or curved into a Swiss roll (Fig. \ref{fig:intrinsic-extrinsic}). In this case, Procrustes or RSA will report high dissimilarity, despite all angles and distances \emph{along} the manifolds being identical. Conversely, one can design embeddings that are highly similar extrinsically but whose intrinsic geometries mismatch. In the following sections, we will show that neural networks can likewise exhibit substantial discrepancies between their intrinsic and extrinsic geometry. This illustrative example reveals the need for a Riemannian approach that targets the similarity of representations' intrinsic geometries, not their embeddings.

\begin{figure}[h]
\vspace{-5pt}
    \centering
    \includegraphics[width=0.95\linewidth]{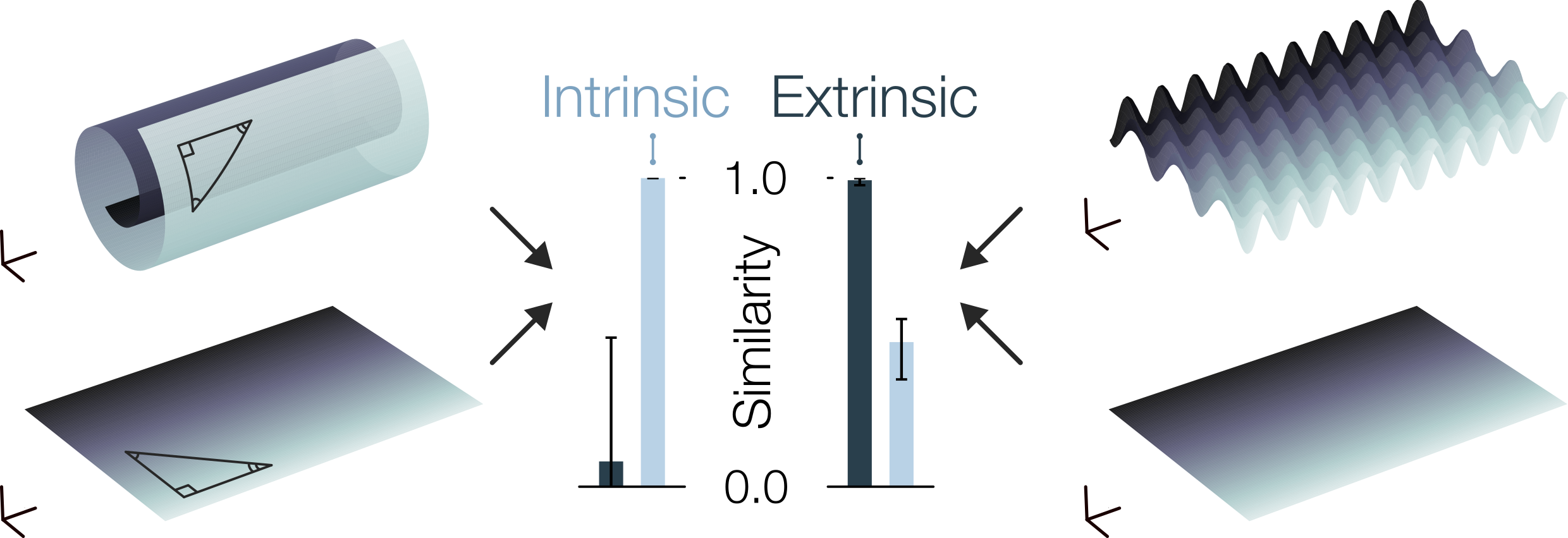}
    \caption{\textbf{\textit{Intrinsic vs. extrinsic geometric similarity.}}}
    \label{fig:intrinsic-extrinsic}
    \vspace{-10pt}
\end{figure}

To address this gap, \textbf{we introduce metric similarity analysis (MSA)}, a new framework to compare the intrinsic geometry of neural network computations from the lens of Riemannian geometry.
%
We highlight MSA as a mathematically grounded method that satisfies the conditions to be a distance function defining a topological metric space over Riemannian metrics of a manifold. 

\ul{Our theoretical contributions are:} 
\begin{enumerate}[leftmargin=1.2em, topsep=0pt, itemsep=0pt, parsep=0pt]

\item \textbf{\textit{A Riemannian similarity measure.}} MSA leverages the pullback metric to define a similarity between the exact intrinsic geometries of manifolds. We prove several mathematical results regarding MSA, including invariance to state-space rotations and choice of intrinsic coordinates. 

\item \textbf{\textit{A novel distance between symmetric positive definite (SPD) matrices.}} In order to compare Riemannian metrics arising from different representations, we propose the \emph{spectral ratio} as a distance on the SPD cone that is well suited for similarity analysis.

\end{enumerate}

\ul{We apply this theory to:} 
\begin{itemize}[leftmargin=1.2em, topsep=0pt, itemsep=0pt, parsep=0pt]
\item \textbf{\textit{Rich and lazy learning.}} We show that common methods report high similarity between rich and lazy deep networks, two learning regimes that are known to use different computational strategies. In contrast, MSA distinguishes clearly between rich vs. lazy representations.

\item \textbf{\textit{Dynamical systems.}} We train state-space models (SSMs) and recurrent neural networks (RNNs) on a sequence working memory task, and show that MSA uncovers differences in their internal computations.

\item \textbf{\textit{Diffusion model.}} In large-scale text-to-image diffusion models, we study how guidance affects latent diffusion dynamics by using MSA to compare the information geometries of their statistical manifolds. 
\end{itemize}

Thus, we introduce a new framework anchored in Riemannian geometry, with which we show that identifying fundamentally distinct neural computations often requires uncovering minute differences in representational geometry.

\newpage

\subsection*{Related work}

\textbf{Distances between representations.} Similarity measures compare the hidden-layer activations of two networks $\varphi^{1}:\mathbb{R}^{n_\text{in}}\to\mathbb{R}^{n_1}$ and $\varphi^{2}:\mathbb{R}^{n_\text{in}}\to\mathbb{R}^{n_2}$ by defining a distance:
$$d(\varphi^1, \varphi^2) \geq 0$$
(or inversely, a similarity; see \citet{klabunde2025similarity} for a review). Methods such as CKA, CCA, RSA and Procrustes first sample the activations of each network given a finite set of $k$ inputs: $M_1=[\varphi^1(\mathbf{x}_1), ..., \varphi^1(\mathbf{x}_k)]$ and $M_2=[\varphi^2(\mathbf{x}_1), ..., \varphi^2(\mathbf{x}_k)]$ for $\mathbf{x}_i\in\mathbb{R}^{n_\text{in}}$. The distance between $\varphi^1$ and $\varphi^2$ is treated as a distance between $M_1$ and $M_2$ as point clouds embedded in state space. For example, Procrustes finds the minimum distance over rotations of one point cloud to another: $d_{\text{Proc}}(M_1,M_2)=\text{min}_{Q\in O(n)}||M_1-M_2Q||_F^2$. 
In this work, we do not perform any such sampling, and instead define a distance between the  functions implemented by the networks using the pullback metrics of $\varphi^1$ and $\varphi^2$. This enables an exact comparison of the intrinsic geometries of the manifolds, factoring out the embedding entirely.

\textbf{Distances between nonlinear dynamics.} A number of methods have recently been developed which go beyond static geometry to instead compare dynamical systems models, whether by learning diffeomorphic mappings between vector fields \citep{chen2024dform,sagodi2025dynamical}, aligning Koopman modes in function space \citep{ostrow2023beyond,zhangkoopstd,huang2025inputdsa}, or computing the Wasserstein distance between embeddings of local dynamics \citep{gosztolai2025marble}. While MSA is not dynamics-specific, it can nonetheless be used to compare the geometry and dynamics of input-driven systems. Furthermore, its non-specificity enables comparison across model classes.

\textbf{Riemannian geometry of neural networks.}
Riemannian geometry has numerous applications in machine learning, including: metric learning \citep{gruffaz2025riemannian}, natural gradients \citep{amari1998natural}, mechanistic interpretability \citep{hauser2017principles,brandon2025emergent, pellegrino2025rnns}, and characterizing the latent space of deep generative models \citep{arvanitidis2017latent, shao2018riemannian, park2023understanding}. One recent paper applied information geometry to compare probabilistic models in the space of their predictions, i.e., outputs \citep{mao2024training}. However, MSA is the first method to define a Riemannian similarity measure on \emph{hidden layer representations}. To this end, we introduce the spectral ratio (SR), a bounded distance on the SPD cone for similarity analysis. Unlike the classical affine-invariant Riemannian metric (AIRM), the boundedness of the SR  guarantees similarities in $[0,1]$. Nevertheless, it would be straightforward to define an AIRM-based variant of MSA, as has been done for RSA when comparing representational covariance matrices \citep{shahbazi2021using}. 

\clearpage

\section{Methods}

In this section we define metric similarity analysis (MSA), which provides an exact comparison of neural representations under the manifold hypothesis. We begin by introducing a neural network as a mapping from an input manifold to a representational manifold in activation space. We then review the pullback metric as an exact characterization of the intrinsic geometry of this mapping at a point on the manifold. To compare two networks, we introduce the spectral ratio (SR): a new pseudodistance between SPD matrices using generalised eigenvalue theory. Finally, we provide a formal definition of MSA as the integrated SR over the input manifold, and demonstrate that it satisfies the conditions to induce a pseudometric space over Riemannian metrics. 

\textbf{Neural network model.} We work in the general setting of a network mapping inputs on a manifold $\mathcal{M}$ to target outputs. Symbolically, such a network can be summarised as:
\[
    \mathcal{M} \xlongrightarrow{\psi} \mathbb{R}^{n_\text{in}} \xlongrightarrow{\varphi} \mathbb{R}^{n} \xlongrightarrow{\zeta} \mathbb{R}^{n_\text{out}}
\]
where $\psi$ is the embedding of the input manifold, $\varphi$ the network up to the hidden layer whose representation we wish to study, and $\zeta$ the subsequent layers and decoder. Many architectures such as multi-layer perceptrons, convolutional networks, and transformers fall within this setting. In section \ref{section:dynamics} we will 
show how MSA can be extended to dynamical systems, including state-space and diffusion models. 

\textbf{The pullback metric.} The intrinsic geometry of a manifold is determined by a choice of inner product over its tangent spaces. To characterise our neural network, we will choose a particular intrinsic geometry on $\mathcal{M}$ that captures how it is warped by $\varphi \circ \psi$. This can be done using the \emph{pullback metric}, which defines the inner product between the tangent vectors of $\mathcal{M}$ in terms of their corresponding dot product in $\mathbb{R}^n$ when mapped to the hidden layer \citep{lee2003smooth,hauser2017principles}. Letting $(\cdot)_*$ denote the pushforward, the pullback metric is the following map:
\[
    g_p:T_p\mathcal{M}\times T_p\mathcal{M} \xlongrightarrow{\psi_*} \mathbb{R}^{n_\text{in}}\times \mathbb{R}^{n_\text{in}} \xlongrightarrow{\varphi_*} \mathbb{R}^{n}\times \mathbb{R}^{n} \xlongrightarrow{\langle \cdot, \cdot \rangle} \mathbb{R}
\]
where $T_p\mathcal{M}$ is the tangent space at point $p\in\mathcal{M}$. Intuitively, the pushforward maps tangent vectors of the abstract input manifold $\mathcal{M}$ to tangent vectors of the hidden layer manifold $(\varphi\circ\psi)(\mathcal{M})$, where their inner product can be measured using the standard Euclidean dot product $\langle \cdot, \cdot \rangle$ (Fig. \ref{fig:pullback}). This value can be taken as the inner product between the original tangent vectors in $T_p\mathcal{M}$, in order to define the same intrinsic geometry on $\mathcal{M}$ as its image in the hidden layer activation.

\begin{figure}[h]
    \centering
    \includegraphics[width=0.92\linewidth]{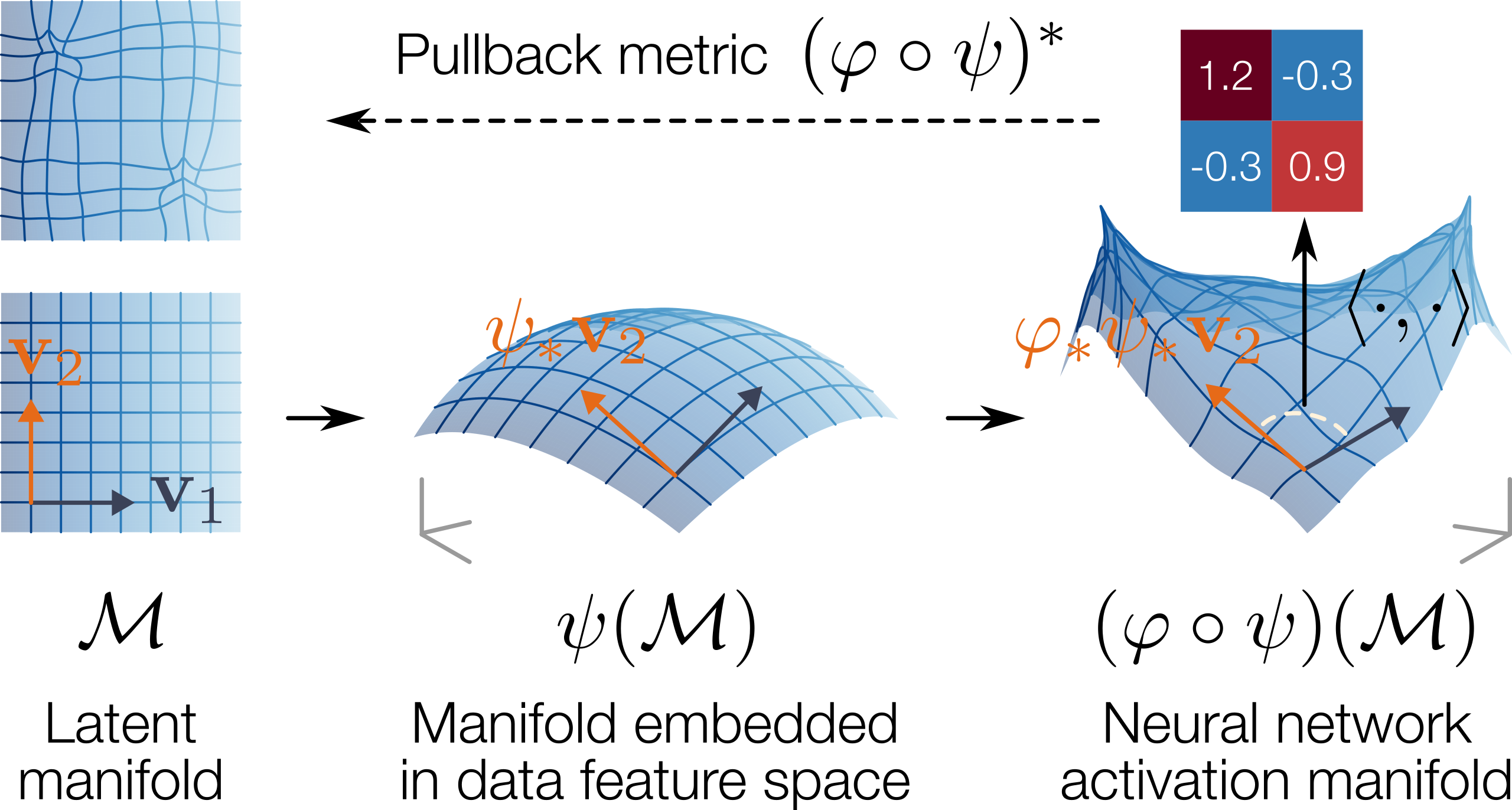}
    \caption{\textbf{\textit{The pullback metric of neural representations.}}}
    \label{fig:pullback}
    \vspace{-15pt}
\end{figure}

In local coordinates, the pushforward at $p$ can be represented by the Jacobian $J(p)$ of $\varphi \circ \psi$. Then the tangent vectors $\mathbf{v}_i \in T_p\mathcal{M}$ are pushed forward to $J(p)\mathbf{v}_i$ in the hidden layer, where their inner product is simply $(J(p)\mathbf{v}_i) \cdot (J(p)\mathbf{v}_j)= \mathbf{v}_i J(p)^TJ(p)\mathbf{v}_j$. In this way, the pullback metric can be represented as $G(p)=J(p)^TJ(p)\in\mathbb{R}^{m\times m}$ where $m=\text{dim}(\mathcal{M})$. In particular, if the pushforward is injective, meaning that $(\varphi \circ \psi)(\mathcal{M})$ is an \emph{immersion}, $G(p)$ is symmetric positive definite (SPD). 

Several studies have shown that the geometry defined by $G(p)$ reflects the internal computations of deep networks \citep{hauser2017principles, tennenholtz2022uncertainty, brandon2025emergent}. We may therefore expect that, for two different networks $\varphi^1$ and $\varphi^2$ receiving data from the same input manifold, comparing their respective pullback metrics $G^1(p)$ and $G^2(p)$ should provide a means of quantifying the similarity of their computations. This requires making a choice of distance function\footnote{Distance functions are also called \textit{metrics}, a concept that is related to -- but not to be confused with -- the \textit{Riemannian metric}.} between SPD matrices.

\textbf{The spectral ratio of generalised eigenvalues.} There exist distance functions 
on the space of SPD matrices \citep{forstner2003metric, lin2019riemannian}. Such distance functions have for example been used to compare empirical covariance matrices, compute the KL divergence between Gaussian distributions, and define canonical correlations between data matrices. Here we introduce a novel distance function on SPD matrix space particularly suited for similarity analysis. Nevertheless, many of our mathematical results hold for general affine-invariant distance functions (App. \ref{app:maths}). 
\begin{definition}[Spectral ratio]\label{def:sr}
    Consider two SPD matrices $G,G'\in\mathbb{R}^{m\times m}$, and the generalised eigenequation $G\mathbf{v}_i = \lambda_iG'\mathbf{v}_i$. Here $\lambda_i\in\mathbb{R}_+$ and $\mathbf{v}_i\in\mathbb{R}^m$ form the $i$th generalised eigenvalue-eigenvector pair of $(G', G)$ with $\lambda_{i+1}\leq\lambda_i$ for $i\in [m]$. We define the spectral ratio as:
    \[
        d_\sr(G, G')=1-\sqrt{\frac{\lambda_{m}}{\lambda_{1}}}\in[0, 1]
    \]
\end{definition}
\begin{proposition}[The spectral ratio is a distance]\label{thm:sr_distance}
    The SR is a pseudo-distance function on SPD matrices. This means that it satisfies i) separation: $d_\sr(G, G)=0$, ii) symmetry: $d_\sr(G, G')=d_\sr(G',G)$ and iii) the triangle inequality: $d_\sr(G, G'')\leq d_\sr(G, G')+d_\sr(G', G'')$.
    ~\hfill \hyperref[app:maths]{\sc\small [Proof]}
\end{proposition}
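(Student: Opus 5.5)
The plan is to rewrite the spectral ratio as a monotone transform of a multiplicative ``condition number'' quantity, prove the triangle inequality in multiplicative form for that quantity, and then transfer it through the transform. For SPD $G,G'$, the generalised eigenvalues of $G\mathbf{v}=\lambda G'\mathbf{v}$ are the ordinary eigenvalues of the SPD matrix $G'^{-1/2}GG'^{-1/2}$, so they are real and positive. The Courant--Fischer theorem then gives the Rayleigh-quotient characterisation
\[
\lambda_1(G,G')=\max_{\mathbf{v}\neq 0}\frac{\mathbf{v}^TG\mathbf{v}}{\mathbf{v}^TG'\mathbf{v}},\qquad
\lambda_m(G,G')=\min_{\mathbf{v}\neq 0}\frac{\mathbf{v}^TG\mathbf{v}}{\mathbf{v}^TG'\mathbf{v}}.
\]
Set $\kappa(G,G')=\lambda_1/\lambda_m\ge 1$ and $\delta(G,G')=\log\kappa(G,G')\ge 0$. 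Then $d_\sr(G,G')=f(\delta(G,G'))$ with $f(t)=1-e^{-t/2}$. The quantity $\delta$ is Hilbert's projective metric on the SPD cone.

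\emph{Separation and symmetry.} If $G=G'$, every generalised eigenvalue equals $1$, so $\kappa=1$ and $d_\sr=0$. Swapping the arguments replaces each $\lambda_i$ by $1/\lambda_i$, because $G'\mathbf{v}=\lambda^{-1}G\mathbf{v}$. The largest eigenvalue becomes $1/\lambda_m$ and the smallest becomes $1/\lambda_1$, so the ratio $\kappa$ is unchanged. Only a pseudo-distance is claimed, which is correct: $d_\sr(G,cG)=0$ for every $c>0$.

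\emph{Triangle inequality for $\kappa$.} For any $\mathbf{v}\neq 0$ we can factor
\[
\frac{\mathbf{v}^TG\mathbf{v}}{\mathbf{v}^TG''\mathbf{v}}=\frac{\mathbf{v}^TG\mathbf{v}}{\mathbf{v}^TG'\mathbf{v}}\cdot\frac{\mathbf{v}^TG'\mathbf{v}}{\mathbf{v}^TG''\mathbf{v}}.
\]
Taking the maximum of both sides gives $\lambda_1(G,G'')\le\lambda_1(G,G')\lambda_1(G',G'')$. Taking the minimum gives $\lambda_m(G,G'')\ge\lambda_m(G,G')\lambda_m(G',G'')$. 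Dividing the first by the second yields $\kappa(G,G'')\le\kappa(G,G')\kappa(G',G'')$, i.e.\ $\delta$ satisfies the triangle inequality.

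\emph{Transfer to $d_\sr$.} This is the step needing care, since $d_\sr$ is not additive in $\delta$. The function $f(t)=1-e^{-t/2}$ is nondecreasing, concave on $[0,\infty)$, and satisfies $f(0)=0$, so it is subadditive. Concretely, with $a=e^{-s/2}$ and $b=e^{-t/2}$ in $(0,1]$, the inequality $f(s+t)\le f(s)+f(t)$ reads $1-ab\le 2-a-b$, which is equivalent to $(1-a)(1-b)\ge 0$. Combining monotonicity and subadditivity,
\[
d_\sr(G,G'')=f(\delta(G,G''))\le f\bigl(\delta(G,G')+\delta(G',G'')\bigr)\le d_\sr(G,G')+d_\sr(G',G'').
\]
Finally, $\kappa\ge 1$ gives $d_\sr\in[0,1)$, consistent with the stated range $[0,1]$.
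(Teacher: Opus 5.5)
Your proof is correct and follows the same structure as the paper's. Both show that $\lambda_m/\lambda_1$ for the pencil $(G,G'')$ is at least the product of the corresponding ratios for $(G,G')$ and $(G',G'')$, by routing Rayleigh quotients through the middle matrix $G'$, and both finish with the same inequality $(1-a)(1-b)\geq 0$.

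Working directly with the generalised Rayleigh quotient $\mathbf{v}^\top G\mathbf{v}/\mathbf{v}^\top G'\mathbf{v}$ is actually the cleaner route. The paper instead whitens by left-multiplying with $(G')^{-1}$, which produces nonsymmetric matrices such as $(G')^{-1}G$, and the standard eigenvalue bounds on the Rayleigh quotient that it invokes do not literally apply to these. That step becomes rigorous only with the symmetric congruence $(G')^{-1/2}G(G')^{-1/2}$, which is exactly what your formulation builds in.
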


Furthermore, since the spectral ratio is bounded, it can be naturally used to define a similarity function:
\[
    1-d_\sr(G, G')\in [0, 1]
\]    
To gain intuition as to what this quantity represents we can consider two extreme cases. First, $1-d_\sr(G, G)=1$. Second, if $\rank(G)\neq \rank(G')$, then $1-d_\sr(G, G')=0$. Figure \ref{fig:sr} shows the behaviour of this similarity function by comparing pairs of $2 \times 2$ SPD matrices generated by continuously varying the angle or the relative magnitudes among the columns of each matrix.

\begin{figure}[h]
    \centering
    \includegraphics[width=0.9\linewidth]{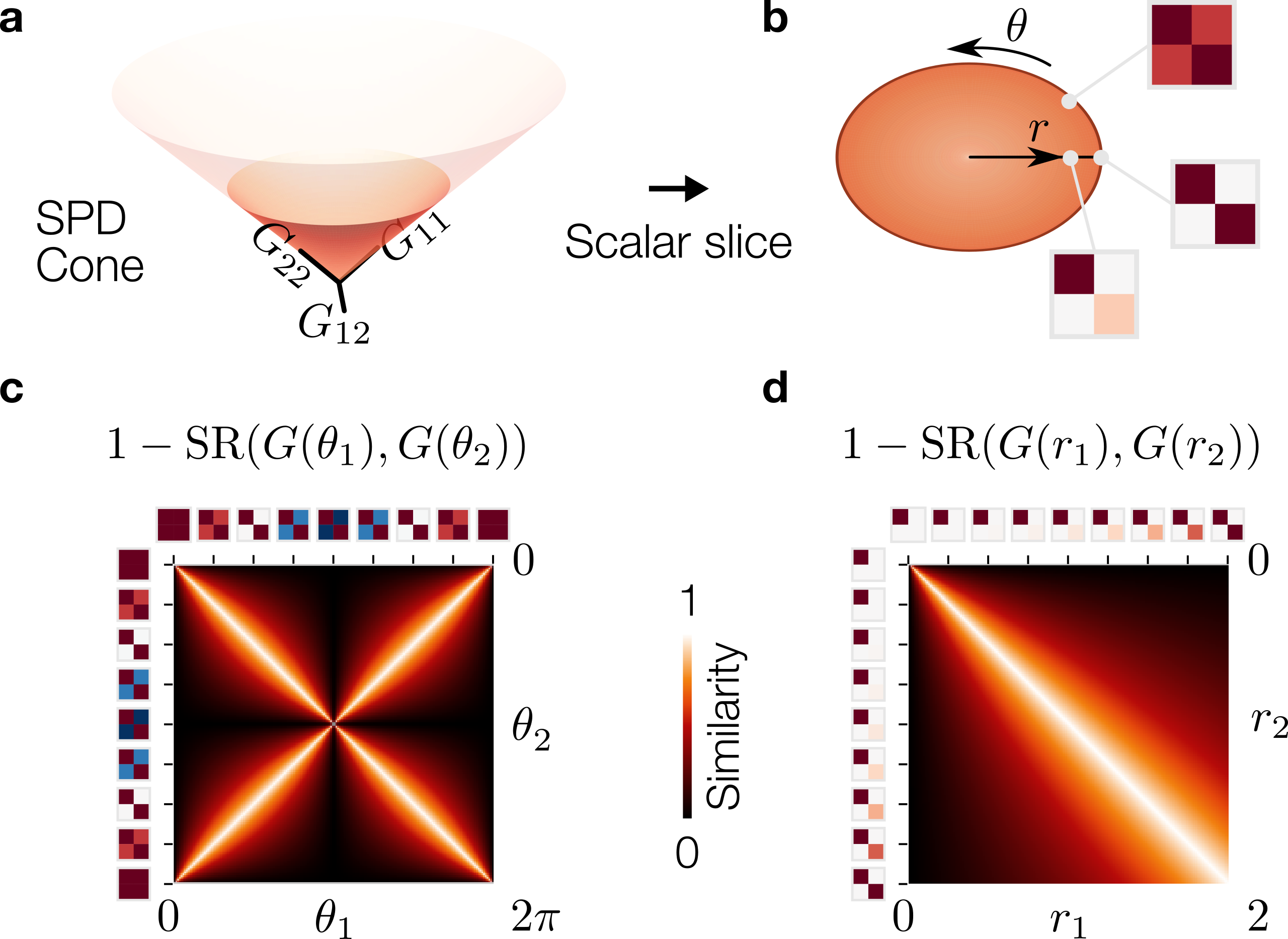}
    \caption{\textbf{\textit{The spectral ratio is a distance over SPD matrices.}} \textbf{a.} The set of SPD matrices (here $2\times 2$) is a solid cone in the Euclidean space of their entries. \textbf{b.} Slice of the SPD cone along matrices which are related by a scalar multiple. The slice is characterised by a variable $r$, defining the relative magnitude of the diagonal entries and $\theta$ defining the magnitude of the off-diagonal entries. The spectral ratio defines a similarity between pairs of matrices on the SPD cone, with different relative $\theta$ (\textbf{c}) or $r$ (\textbf{d}).}
    \label{fig:sr}
\end{figure}

\textbf{Metric Similarity Analysis (MSA).} We have argued for using the pullback metric to characterize how the intrinsic geometry around a particular point $p \in \mathcal{M}$ is transformed by a neural network. Recall that for a specific choice of coordinate system, the pullback metric can be represented as an SPD matrix. Thus, we can compare two networks globally by integrating their spectral ratio over $\mathcal{M}$:

\begin{definition}[Metric similarity analysis]
    Let $G_p^{\varphi^1}$ and $G_p^{\varphi^2}$ the local coordinate representations of pullback metrics of the neural networks $\varphi^1$ and $\varphi^2$, at a point $p$ on an $m$-dimensional manifold $\mathcal{M}$. Then we define MSA as:
    \[
        d_\msa(\varphi^{1}, \varphi^{2}) = \frac{1}{\mathrm{Vol}_g(\mathcal{M})}\int_{\mathcal{M}} d_\sr(G_p^{\varphi^1}, G_p^{\varphi^2}) \mathrm{dvol}_g(p)
    \]
    where the normalising factor $\mathrm{Vol}_g(\mathcal{M})$ and integration over the volume form $\mathrm{dvol}_g$ are dependent on the metric $g$ on the input data manifold.
\end{definition}

Just as the SR is a distance over SPD matrices, MSA defines a distance over Riemannian metrics.

\begin{proposition}[MSA is a distance over pullback Riemannian metrics]\label{thm:msa_distance} 
    MSA follows the separation, symmetry and triangle inequality identities.~\hfill \hyperref[app:maths]{\sc [Proof]}
    
\end{proposition}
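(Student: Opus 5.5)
The plan is to derive all three properties pointwise from Proposition~\ref{thm:sr_distance} and then pass them through the integral defining $d_\msa$. Since the integral against $\mathrm{dvol}_g$ is linear and monotone, and the normalising factor $\mathrm{Vol}_g(\mathcal{M})$ is a positive constant depending only on the fixed metric $g$ on the input manifold (not on $\varphi^1,\varphi^2$), pointwise identities and inequalities between integrands carry over directly to $d_\msa$. The preliminary step is to check that the integrand is well defined and integrable. We assume the pullbacks are immersions so that $G_p^{\varphi}$ is SPD, and that the networks are smooth so that $G_p^{\varphi}$ depends continuously on $p$. The generalised eigenvalues then depend continuously on $p$, so $p\mapsto d_\sr(G_p^{\varphi^1},G_p^{\varphi^2})$ is measurable. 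It is also bounded in $[0,1]$, hence integrable whenever $\mathrm{Vol}_g(\mathcal{M})<\infty$.

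We also check that the integrand does not depend on the choice of local coordinates, so the integral over $\mathcal{M}$ makes sense when patching charts. Under a change of chart with Jacobian $A$, each pullback transforms as $G\mapsto A^TGA$. The generalised eigenproblem $A^TG^1A\mathbf{w}=\lambda A^TG^2A\mathbf{w}$ has the same eigenvalues as the original problem, via $\mathbf{w}=A^{-1}\mathbf{v}$. So $d_\sr$ is invariant under congruence and the integrand is a genuine function on $\mathcal{M}$.

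The three properties then follow in turn.
\begin{itemize}
\item \emph{Separation.} If $\varphi^1=\varphi^2$ then $G_p^{\varphi^1}=G_p^{\varphi^2}$ for every $p$. Proposition~\ref{thm:sr_distance}(i) gives a zero integrand, so $d_\msa(\varphi,\varphi)=0$. More generally, the same holds whenever the two pullback metrics agree, for example when the networks differ by an isometry of state space.
\item \emph{Symmetry.} By Proposition~\ref{thm:sr_distance}(ii), the integrands for $(\varphi^1,\varphi^2)$ and $(\varphi^2,\varphi^1)$ coincide pointwise, so the integrals coincide.
\item \emph{Triangle inequality.} For a third network $\varphi^3$, Proposition~\ref{thm:sr_distance}(iii) gives, for each $p$,
\[
d_\sr(G_p^{\varphi^1},G_p^{\varphi^3})\le d_\sr(G_p^{\varphi^1},G_p^{\varphi^2})+d_\sr(G_p^{\varphi^2},G_p^{\varphi^3}).
\]
Integrating against $\mathrm{dvol}_g$ and dividing by $\mathrm{Vol}_g(\mathcal{M})$ yields $d_\msa(\varphi^1,\varphi^3)\le d_\msa(\varphi^1,\varphi^2)+d_\msa(\varphi^2,\varphi^3)$.
\end{itemize}

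The main obstacle is not the inequalities, which are inherited, but the well-posedness step. The argument needs $d_\sr$ to be a globally defined, integrable function on $\mathcal{M}$. That rests on congruence invariance of the generalised spectrum and on the SPD assumption; if rank drops at some points, SR jumps to $1$ there. I would handle this by noting that $d_\sr$ stays bounded and measurable even with such rank drops, so integrability survives; alternatively, one can restrict to immersions as the paper implicitly does.
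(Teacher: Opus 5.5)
Your proposal is correct and follows the same route as the paper: each of separation, symmetry and the triangle inequality is obtained pointwise from Proposition~\ref{thm:sr_distance} and then carried to $d_\msa$ by linearity and monotonicity of the normalised integral. The paper phrases the last step as integrating the nonnegative defect $\Delta d_\sr(p)$. Your extra well-posedness checks (congruence invariance of the generalised spectrum, and measurability and boundedness of the integrand) are left implicit in the paper's proof; coordinate independence is treated separately there in Proposition~\ref{thm:coordinate_invariance}.
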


As for the SR, the MSA distance is bounded, and can be turned into a similarity function:
\[
    1-d_\mathrm{MSA}(\varphi^1, \varphi^2) \in [0, 1]
\]
We use ``MSA'' to refer to either the distance or similarity function, depending on the context.

MSA provides a principled means of comparing the intrinsic geometries of neural network representations (Fig. \ref{fig:msa}). We will return to the mathematical properties of MSA in Section~\ref{section:maths}, where we show that it is invariant to the choice of coordinates on the manifold and to rotations in the neural network state space. In the next section, we demonstrate the applicability and relevance of MSA for comparing the intrinsic geometries of neural networks in various settings.

\begin{figure}[h]
    \centering
    \includegraphics[width=\linewidth]{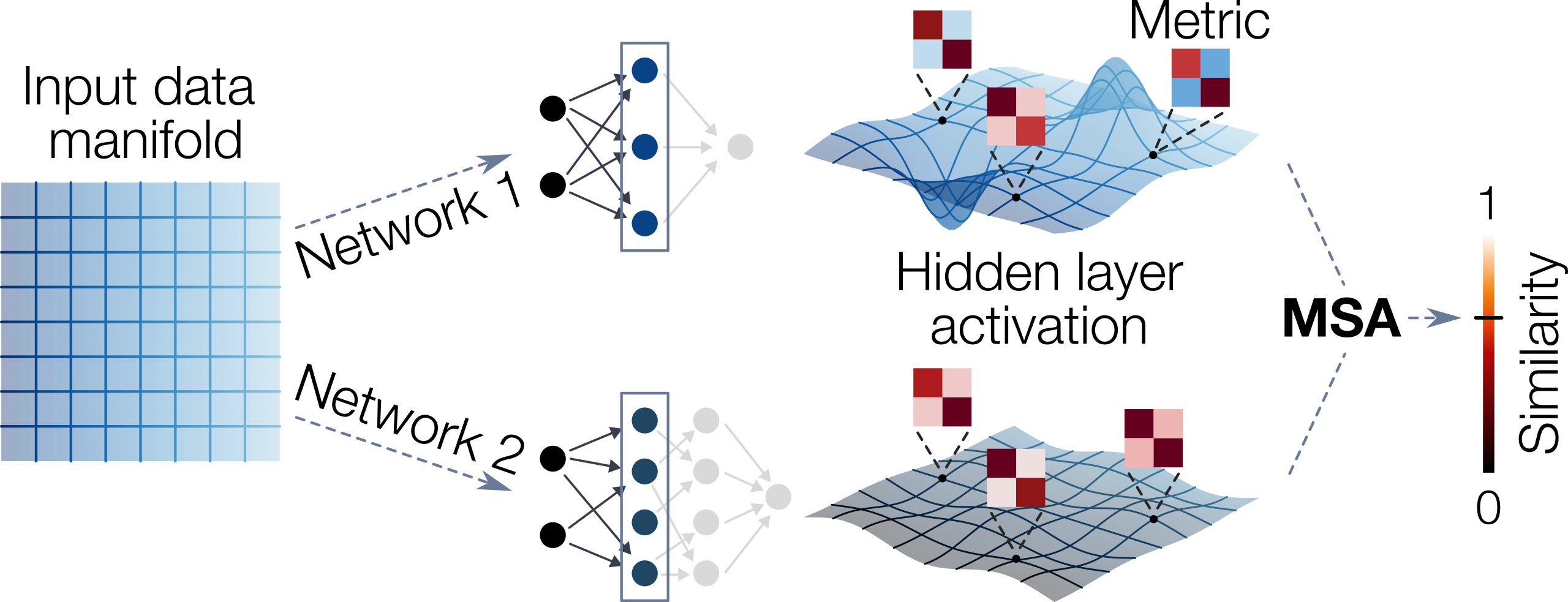}
    \caption{\textbf{\textit{Metric similarity analysis enables comparison of the intrinsic geometry of neural representations.}} Two neural networks receiving inputs from the same manifold with different hidden-layer geometries, corresponding to distinct Riemannian metrics.}
    \label{fig:msa}
\end{figure}


\section{Numerical experiments}

Here, we highlight three key applications of MSA: i) revealing structural differences in the intrinsic geometry of deep network representations, ii) disentangling the computational mechanisms of different nonlinear dynamics architectures, and ii) comparing statistical manifolds in diffusion models.


\subsection*{MSA distinguishes rich and lazy representations}


To highlight the importance of characterizing the \textit{intrinsic} geometry of neural representations, we start by performing MSA on models known to produce different solutions to the same task: \emph{rich} and \emph{lazy} deep networks. 

\begin{figure*}[ht]
    \centering
    \includegraphics[width=\linewidth]{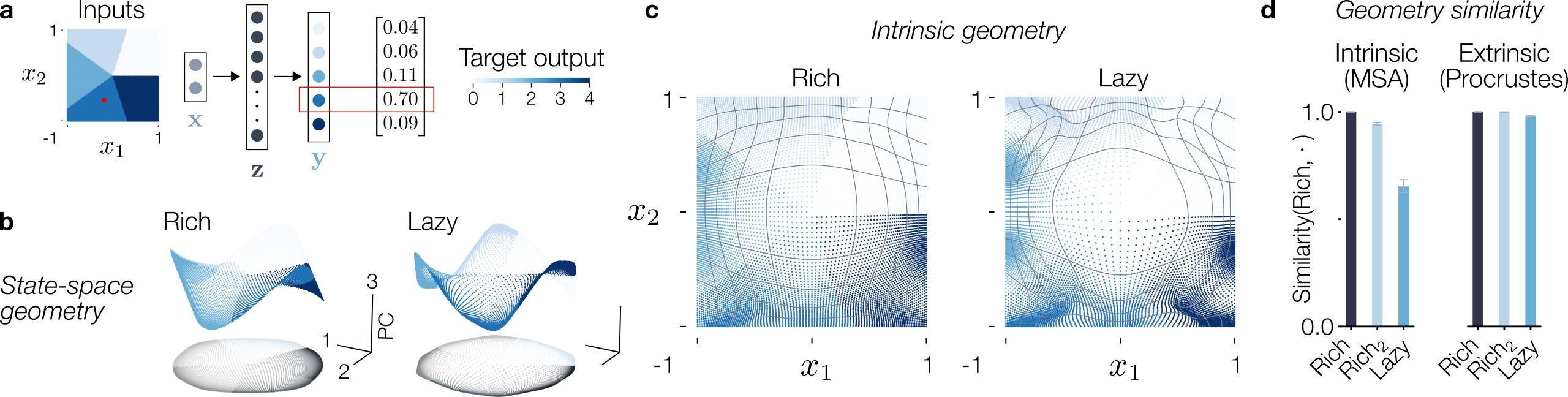}
    \caption{\textbf{\textit{Rich and lazy network representations have different intrinsic geometries.}} \textbf{a.} Network trained to map a 2D manifold to discrete classes. \textbf{b.} PCA applied to the activation of the rich and lazy network, coloured according to the target class. \textbf{c.} Mesh grid of the intrinsic geometry of the representations. \textbf{d.} Similarity between the rich and lazy representation; rich$_2$ is a different seed.}
    \label{fig:rich_lazy}
    \vspace{-10pt}
\end{figure*}

We trained a one-hidden-layer neural network to map inputs on a two-dimensional manifold to discrete classes (Fig. \ref{fig:rich_lazy}\textbf{a}). Specifically, we considered $\mathcal{M} = [-1,1]^2$, with:
\[
    \mathbf{x}\in \mathcal{M}, \quad \mathbf{z} = \operatorname{tanh}(W_1 \mathbf{x}), \quad \mathbf{y}= W_2\mathbf{z}
\]

The network was trained using a cross entropy loss to map $\mathbf{x}$ to a one-hot-encoded class vector: $$\text{onehot}(\theta, k)=[\mathds{1}_{\theta\in[0,2\pi/k)},...,\mathds{1}_{\theta\in[2\pi-2\pi/k,2\pi)}]$$ where $\theta=\text{atan2}(\mathbf{x})\in [0, 2\pi)$ and $k$ is the number of classes.

Considerable prior work has shown that neural networks can perform such tasks via two fundamentally different computational mechanisms.
In the rich regime, networks learn latent features of their training data, leading to structured representations, whereas networks in the lazy regime overfit individual training data points \citep{woodworth2020kernel}. Small-variance weight initializations are known to produce rich learning, whereas large-variance initializations tend to lead to lazy learning \citep{saxe2013exact}. 

\begin{figure}[!b]
    \centering
    \includegraphics[width=\linewidth]{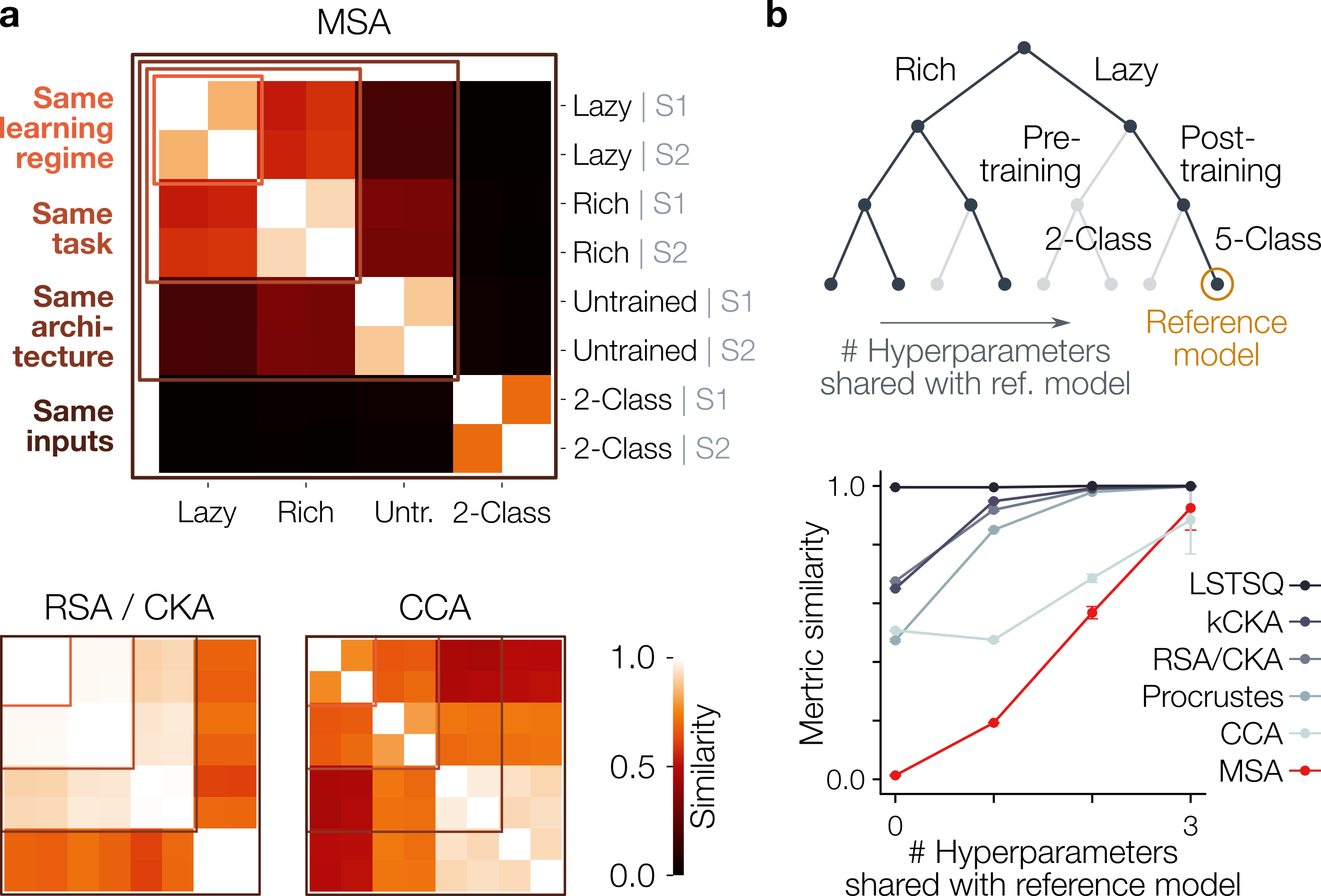}
    \caption{\textbf{\textit{MSA captures hierarchical structure across neural representations.}} Networks with different learning regimes, tasks, architectures, and seeds.  \textbf{a.} MSA, RSA and CCA applied to pairs of these networks. Note that RSA and CKA are equivalent when mean-centered \citep{williams2024equivalence}. MSA captures the hierarchical structure across models while CCA and RSA do not. \textbf{b.} Instead of considering all combinations of such hyperparameters, we hierarchically vary them. All networks receive the same input on a 2-dimensional planar manifold. They have different learning regimes (rich vs. lazy), or different tasks (5 vs. 2 classes), or seeds (S1/S2, defining initial weights).}
    \label{fig:rich_lazy2}
    \vspace{-12pt}
\end{figure}

What representational geometries lie at the heart of these different computations? Simply visualizing the network activation in state space via principal component analysis (PCA) suggests a similar embedding geometry in each regime (Fig. \ref{fig:rich_lazy}\textbf{b}). In contrast, plotting the geodesic grid lines of the input manifold under the pullback metric hints at distinct intrinsic geometries, potentially supporting  different network computations (Fig. \ref{fig:rich_lazy}\textbf{c}). To quantify this, we performed similarity analyses between the rich and lazy networks using both the intrinsic (via MSA) and extrinsic geometries (via Procrustes). Indeed, Procrustes reported near-perfect similarity between lazy and rich networks, whereas MSA reported low values compared to the similarity of the same regime across seeds (\ref{fig:rich_lazy}\textbf{d}). Thus, characterizing the intrinsic geometry is necessary to disentangle fundamentally different computations in neural networks.

We next sought to exploit the fact that different hyperparameters generate divergent network solutions, due to different tasks (determined by the number of classes), learning regimes (by the variance of initial weights), or initializations (by the pseudo-random seed). To further test whether MSA can identify meaningful differences in network solutions, we varied these hyperparameters sequentially, forming a hierarchical structure (Fig. \ref{fig:rich_lazy2}\textbf{b}). We then asked whether MSA identified similarities matching this hierarchy.

Indeed, MSA was able to capture this hierarchy, with similarities scaling evenly with the number of differing hyperparameters across models (Fig. \ref{fig:rich_lazy2}\textbf{a},\textbf{b}). In comparison, RSA was unable to distinguish between rich, lazy, and even untrained models. Importantly, both RSA and CCA reported intermediate similarity values for models with the fewest shared hyperparameters, with fundamentally different computations -- for example when comparing trained vs. untrained models, or those trained on two vs. five classes. MSA, in contrast, found zero similarity in these cases. 

These results demonstrate that MSA can disentangle finer-grained task-relevant structure than methods based purely on extrinsic geometry, enabling more meaningful comparisons between networks. In the next section, we will show that similar insights can be gained for nonlinear dynamics. 


\subsection*{MSA enables the comparison of nonlinear dynamics}\label{section:dynamics}

Here, we will show how MSA can be used to investigate dynamical systems models by examining how input manifolds are warped over time by nonlinear flows \citep{pellegrino2025rnns}. Specifically, we asked whether MSA could provide insight into the internal computations of dynamical systems models performing a task in which an internal memory of the input manifold must be maintained.

We trained models to reproduce an input sequence defined by two angles $\theta_1, \theta_2 \in \mathbb{S}^1$ following a variable delay (Fig. \ref{fig:rnnsmm}\textbf{a}). We considered two architectures commonly used in neuroscience applications: vanilla RNNs and structured SSMs \citep{durstewitz2023reconstructing, ryoo2025generalizable}. The dynamics of each model was governed by:
\begin{align*}
    \mathrm{RNN:}& \quad \phantom{^{(i)}}\frac{d}{dt}\mathbf{x}(t) = A\tanh(\mathbf{x}(t)) + B\mathbf{u}(t) \\
    \mathrm{SSM:}& \quad \frac{d}{dt}\mathbf{x}^{(i)}(t) = A^{(i)}\tanh(\mathbf{x}^{(i)}(t)) + B^{(i)}\mathbf{u}^{(i)}(t)
\end{align*}
where $\mathbf{u}(t) = \mathbf{u}^{(0)}(t) \in \mathbb{R}^3$ encodes the input angles and delay for each model (Fig. \ref{fig:rnnsmm}\textbf{a}). The SSM consisted of two blocks of dynamics ($i\in\{0,1\}$) chained through a nonlinearity:  $\mathbf{u}^{(1)}(t) = C^{(0)}\tanh(\mathbf{x}^{(0)}(t))$. The outputs were decoded from the hidden state: $\mathbf{y}(t)=C\mathbf{x}(t)$ and $\mathbf{y}(t)=C^{(1)}[\mathbf{x}^{(1)}(t),\mathbf{x}^{(2)}(t)]$ for the SSM and RNN, respectively. All $A$'s, $B$'s, and $C$'s were trainable parameters. The RNN weights were initialised to be Gaussian, while the SSM weights were initialised deterministically with HiPPO-LegS \citep{gu2022train}.

\begin{figure}[!b]
    \centering
    \includegraphics[width=\linewidth]{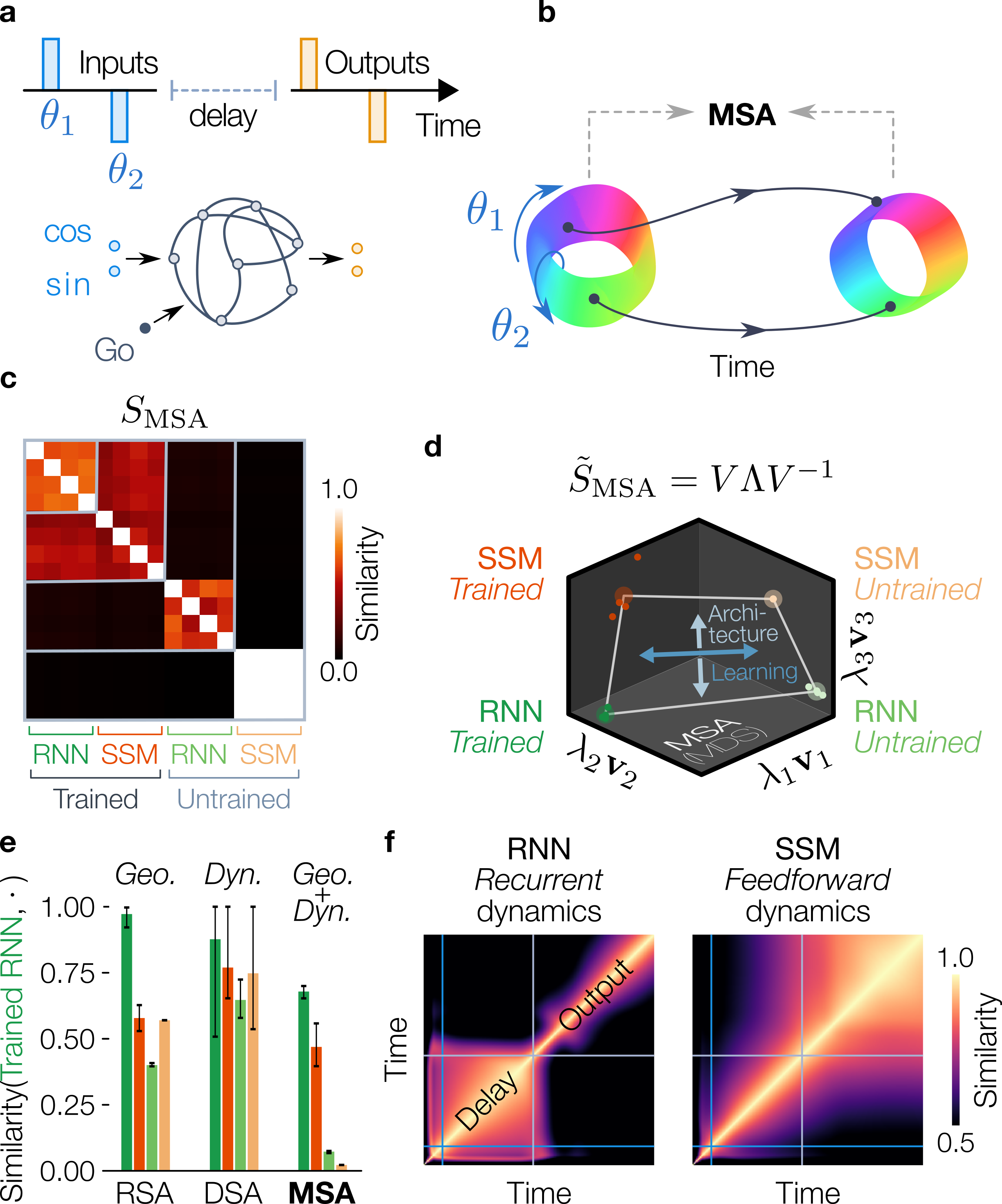}
    \caption{\textbf{\textit{MSA enables comparing state-space models to recurrent neural networks.}}
    \textbf{a.} The network receives a sequence of angular inputs: $u_i(t) = f_i(\theta_1)\mathds{1}_{t=t_1}+f_i(\theta_2)\mathds{1}_{t=t_2}$ for $i=1,2$, with $f_1(x) = \cos(x)$ and $f_2(x) = \sin(x)$, which it must keep in memory for a variable delay period. At the Go cue (signalled by $u_3(t) = \mathds{1}_{t=t_{\operatorname{go}}}$), it must output same sequence. \textbf{b.} PCA applied to the state of the RNN. 
    \textbf{c.} MSA applied to the full input-time manifolds of different models.
    \textbf{d.} Eigenvectors of the doubly-centred pairwise MSA similarity matrix (i.e. MDS with MSA distance).
    \textbf{e.} Similarity of the geometry (RSA), dynamics (DSA) and geometry+dynamics (MSA) between the trained RNN and all other models.
    \textbf{f.} MSA applied to compare the representations within a single model at two timepoints (after the second input).
    }
    \label{fig:rnnsmm}
    \vspace{-10pt}
\end{figure}

The input manifold for this task is a two-dimensional torus, $\mathcal{M}=\mathbb{T}^2=\mathbb{S}^1 \times \mathbb{S}^1$ parametrised by $\theta_1$ and $\theta_2$ (Fig. \ref{fig:rnnsmm}\textbf{b}). After receiving the inputs, the model must maintain a representation of both angles in its internal state during the delay period. Multiple computational mechanisms have been proposed for such working memory tasks, from continuous attractors of fixed points \citep{khona2022attractor} to dynamical memories supported by non-normal dynamics \citep{ganguli2008memory, goldman2009memory}. Importantly, the architecture biased the implemented mechanism: e.g. the SSM's block structure favoured non-normal dynamics. We thus asked whether the two models employ different computational strategies to perform the task.

We applied MSA to the full input-time manifolds produced by the RNN and the SSM. Specifically, we consider time as a separate coordinate, forming a 3-dimensional manifold $\mathcal{M}'$ on which the $3 \times 3$ pullback metrics could be defined, enabling the geometry and dynamics of the models to be simultaneously compared. MSA was able to cluster the same architectures over seeds, while also identifying differences between trained and untrained models (Fig. \ref{fig:rnnsmm}\textbf{c}). To visualise these results we applied a variant of multidimensional scaling (MDS) on the MSA distance matrix, observing a low-dimensional representation that reflected the structure of the hyperparameters (architecture and training, Fig. \ref{fig:rnnsmm}\textbf{d}).

We asked whether it was possible to observe these differences through comparisons of the geometry or dynamics separately, by either applying RSA or DSA (Fig. \ref{fig:rnnsmm}\textbf{e}). While RSA was able to separate between the two trained model architectures it was not sensitive to training stage, reporting the same similarity between the trained RNN vs. untrained SSM as the trained RNN vs. trained SSM (yellow vs. orange bars). DSA was even less sensitive to differences between the models. In contrast, MSA identified similarities that scaled both with architecture and training stage.

To see traces of the dynamic geometry within the models, we next applied MSA to compare the representation within the \emph{same} model between timepoints $t$ and $t'$ (Fig. \ref{fig:rnnsmm}\textbf{f}). Interestingly, the two models showed different changes in their geometries over time, with only the RNN maintaining high similarity throughout the delay. These results echo recent work showing that linear dynamics can converge to non-normal rotational solutions \citep{ritter2025efficient}, which could skew geometries due to shearing. 

Together these results demonstrate how MSA reveals structural differences in dynamical representations of models while providing insight into their internal computations.


\subsection*{MSA extends to statistical manifold analysis}\label{section:diffusion}

We next demonstrate how MSA can be extended to a probabilistic setting. We start by reviewing the link between the pullback and Fisher-Rao metrics, before applying MSA to generative text-to-image diffusion models.

Consider a statistical manifold $\mathcal{M}$ whose points $\boldsymbol{\pi}\in\mathcal{M}$ parametrise a probability density $p_{\boldsymbol{\pi}}(\mathbf{x})$ over $\mathbf{x}\in\mathbb{R}^n$. For a given sample $\mathbf{x}$, the log-likelihood defines a map:
\[
    \mathcal{M} \xlongrightarrow{\log p_{(\cdot)} (\mathbf{x})} \mathbb{R}
\]
whose pullback in local coordinates is $J_{p}(\boldsymbol{\pi})^\top J_{p}(\boldsymbol{\pi}) \in \mathbb{R}^{m \times m}$, where $J_{p}(\boldsymbol{\pi})=(\nabla_{\boldsymbol{\pi}} \log p_{\boldsymbol{\pi}}(\mathbf{x}))^\top$. Taking an expectation over $\mathbf{x}$, we recover the Fisher information matrix:
\[
    F(\boldsymbol{\pi}) = \mathbb{E}_{\mathbf{x}\sim p_{\boldsymbol{\pi}}} [\nabla_{\boldsymbol{\pi}} \log p_{\boldsymbol{\pi}}(\mathbf{x})\otimes \nabla_{\boldsymbol{\pi}} \log p_{\boldsymbol{\pi}}(\mathbf{x})] \in \mathbb{R}^{m\times m}
\]
This SPD matrix defines the local coordinate expression of the Fisher-Rao metric, encoding changes in $p_\pi(\cdot)$ over $\mathcal{M}$. 

\begin{figure}[!t]
    \centering
    \includegraphics[width=\linewidth]{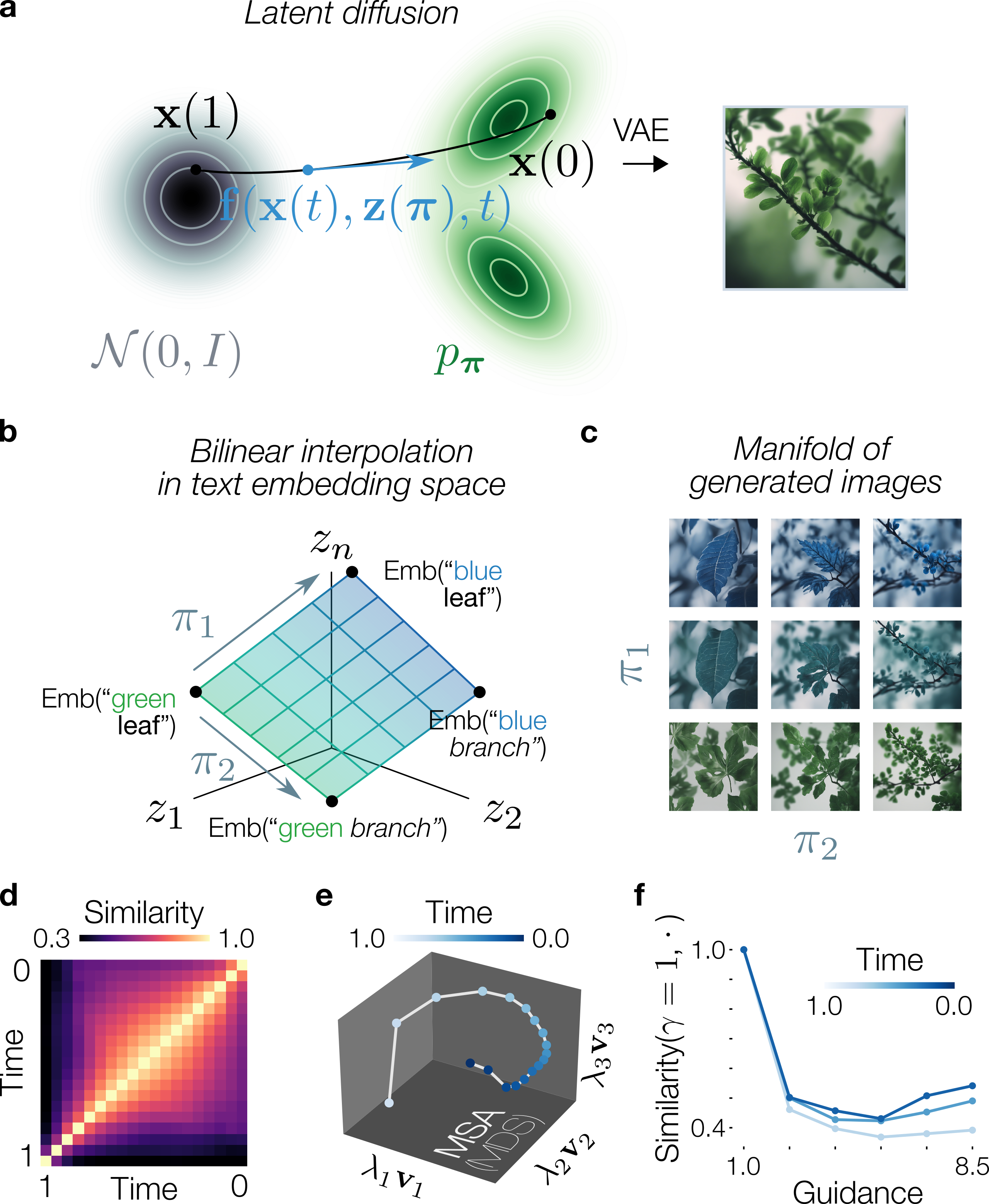}
    \caption{\textbf{\textit{The latent space of diffusion models collapses without guidance.}}
    \textbf{a.} Schematic of StableDiffusionXL: a backward diffusion process transports a Gaussian distribution to a latent generative distribution. Samples are then mapped to image space via a variational auto-encoder (VAE).
    \textbf{b.} The diffusion process is conditioned on a text embedding. We generated a manifold of text embeddings via bilinear interpolation: 
    $\mathbf{z}(\boldsymbol{\pi})=\pi_1\pi_2 \mathbf{v}_1+(1\text{-}\pi_1)\pi_2 \mathbf{v}_2+\pi_1(1\text{-}\pi_2) \mathbf{v}_3+(1\text{-}\pi_1)(1\text{-}\pi_2) \mathbf{v}_4.$
    This defines a statistical manifold of generative distributions.
    \textbf{c.} Samples across the manifold for a fixed $\mathbf{x}(1)$.
    \textbf{d.} Similarity between the statistical manifold geometry at any pair of timepoint. 
    \textbf{e.} Visualization of the information geometry using MDS on the MSA distance.
    \textbf{f.} MSA similarity between the statistical manifold at different guidance levels vs. no guidance ($\gamma=1$).
    }
    \label{fig:diffusion}
    \vspace{-10pt}
\end{figure}

We next show how MSA can compare the information geometries of diffusion models. A diffusion model is a backward-in-time stochastic differential equation (SDE):
\[
    \mathbf{dx} = \mathbf{f}(\mathbf{x}, {\boldsymbol\pi}, t)\mathrm{dt} + \sigma(t)\mathrm{dB}, \quad \mathbf{x}(1)\sim \mathcal{N}(\boldsymbol{0}, I)
\]
where $B$ is a Brownian motion and $\boldsymbol{\pi}\in\mathcal{M}$ is a variable that deterministically controls the diffusion process (e.g. a text prompt for conditional image generation). The model can be summarised as:
\[
    \mathbb{R}^n  \xlongrightarrow{\varphi_{\pi}} \mathbb{R}^n, \qquad \mathbf{x}(1) \rightarrow \mathbf{x}(0)
\]
where $\varphi_\pi$ is the flow of the system, and $\mathbf{x}(0)$ the generated sample (e.g. an image). This SDE is associated with a corresponding probability flow:
\[
    \partial_t p_{\boldsymbol{\pi}}(\mathbf{x}, t) =  - \nabla_{\mathbf{x}} \cdot \left(\mathbf{f}(\mathbf{x}, \boldsymbol{\pi}, t)p_{\boldsymbol{\pi}}(\mathbf{x}, t)\right) + \frac{1}{2}\sigma^2(t) \Delta_{\mathbf{x}}p_{\boldsymbol{\pi}}(\mathbf{x}, t)
\]
where $\nabla \cdot$ is the divergence, $\Delta$ the Laplacian, and $p_{\boldsymbol{\pi}}(\mathbf{x}(1), 1) = \mathcal{N}(\boldsymbol{0}, I)$. Solving this flow at $p_{\boldsymbol{\pi}}(\mathbf{x}(0), 0)$ gives the marginal density of the learned generative distribution of the SDE at $\mathbf{x}(0)$. Here we study the information geometry of this distribution over $\boldsymbol{\pi}\in\mathcal{M}$. 

Thanks to the scalability of MSA, we can apply it to StableDiffusionXL, a large text-to-image diffusion model capable of generating high quality images \citep{podell2023sdxl}. The conditional variable $\boldsymbol{\pi}$ is the text embedding of the prompt such that the latent diffusion dynamics are given by:
\[
    \mathbf{dx} = \mathrm{UNet}(\mathbf{x}, \mathrm{TextEmb}(\text{\footnotesize``a leaf''}), t)\mathrm{dt} + \sigma(t)\mathrm{dB}
\]
We defined a manifold $\mathcal{M}$ of text embeddings via bilinear interpolation between four basis embeddings (Fig. \ref{fig:diffusion}\textbf{b}). This produces a statistical manifold with each $\pi \in \mathcal{M}$ corresponding to a different distribution of generated images. Using MSA we computed the similarity of the Fisher-Rao metric at pairs of time points during the diffusion process, reavealing dynamic changes in the intrinsic geometry throughout the flow (Fig. \ref{fig:diffusion}\textbf{d}). Furthermore, applying MDS on the MSA distance matrix enabled  the trajectory of the information geometry to be visualized in a low-dimensional embedding (Fig. \ref{fig:diffusion}\textbf{e}). 

Finally, MSA can be used to compare different diffusion dynamics, e.g., while varying hyperparameters. In particular, we studied how \textit{guidance} affected the underlying information geometry. The guided diffusion dynamics are given by adding an estimation of the marginal score via an empty-string embedding:
\begin{align*}
    \mathbf{dx} = & \bigl( \gamma\mathrm{UNet}(\mathbf{x}, \mathrm{TextEmb}(\text{\footnotesize``a leaf''}), t)
    \\ &+(1- \gamma) \mathrm{UNet}(\mathbf{x}, \mathrm{TextEmb}(\text{\footnotesize`` ''}), t)\bigr) \mathrm{dt}+\sigma(t) \mathrm{dB}
\end{align*}
Previous work has found empirically that the guidance parameter $\gamma$ controls a trade-off between image diversity and alignment-to-prompt \citep{ho2022classifier}. To study this phenomenon at the level of the latent variable $\mathbf{x}$ we computed the MSA similarity of models with different levels of guidance against the guidance-free model. We found that, past a certain $\gamma$, the geometry of the manifold became more similar to the guidance-free model, especially at times near $0$. This hints at an optimal level $\gamma$ where the information geometry of the learned generative distribution is most dissimilar to the guidance-free model,  potentially aiding hyperparameter fine tuning.

\section{Mathematical properties} \label{section:maths}

In this section we briefly introduce two mathematical properties of MSA which are essential for it to be a well-defined similarity metric. First, a fundamental desideratum of any similarity measure is invariance to rotations. This is the case for many methods based on extrinsic geometry, such as CCA and Procrustes. MSA also satisfies this property.

\begin{proposition}[MSA is invariant to state-space rotations]\label{thm:rotation_invariance}
    Consider two networks $\varphi^1:\mathbb{R}^{n_\mathrm{in}}\rightarrow\mathbb{R}^{n_1}$ and $\varphi^2:\mathbb{R}^{n_\mathrm{in}}\rightarrow\mathbb{R}^{n_2}$. Then:
    \[
        d_\msa(Q_1\circ\varphi^1, Q_2\circ\varphi^2)=d_\msa(\varphi^1, \varphi^2)
    \]
    where $Q_i \in O(n_i)$ is an orthogonal matrix .
    ~\hfill \hyperref[proof:rotation_invariance]{\sc\small [Proof]}
\end{proposition}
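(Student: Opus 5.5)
The plan is to show that the integrand of $d_\msa$ is unchanged pointwise. Then the integral is unchanged as well, because the normalising factor $\mathrm{Vol}_g(\mathcal{M})$ and the volume form $\mathrm{dvol}_g$ depend only on the metric $g$ of the input manifold. That metric involves neither $\varphi^i$ nor $Q_i$.

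\textbf{Step 1: the pullback metric is unchanged.} Fix $p\in\mathcal{M}$ and local coordinates. Write $J^i(p)$ for the Jacobian of $\varphi^i\circ\psi$ at $p$. Since $Q_i$ is a constant linear map, the chain rule gives that the Jacobian of $Q_i\circ\varphi^i\circ\psi$ is $Q_iJ^i(p)$. The pullback metric of the rotated network is therefore
\[
(Q_iJ^i(p))^\top (Q_iJ^i(p)) = J^i(p)^\top Q_i^\top Q_i J^i(p) = J^i(p)^\top J^i(p) = G_p^{\varphi^i},
\]
using $Q_i^\top Q_i = I_{n_i}$. This holds separately for $i=1,2$. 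It does not matter that $n_1\neq n_2$, because both pullback metrics are $m\times m$ matrices on the common input manifold.

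\textbf{Step 2: the spectral ratio is unchanged.} By Step 1, the pair of matrices entering Definition~\ref{def:sr} at each $p$ is literally the same pair $(G_p^{\varphi^1},G_p^{\varphi^2})$. The generalised eigenvalues $\lambda_1,\dots,\lambda_m$ are therefore the same, and so is $d_\sr$. Hence $d_\sr(G_p^{Q_1\circ\varphi^1},G_p^{Q_2\circ\varphi^2})=d_\sr(G_p^{\varphi^1},G_p^{\varphi^2})$ for every $p$.

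\textbf{Step 3: integrate.} Integrating the pointwise equality against the same measure $\mathrm{dvol}_g/\mathrm{Vol}_g(\mathcal{M})$ gives the claim.

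There is no real obstacle here. The only points to check are that $Q_i$ acts after $\varphi^i$, so it appears on the left of the Jacobian and cancels in $J^\top J$, and that the integration measure is independent of the networks. I would also remark that the argument works unchanged for any distance on SPD matrices in place of $d_\sr$, since only the input matrices are involved.
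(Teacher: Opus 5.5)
Your proposal is correct and follows essentially the same route as the paper. The paper likewise shows that the rotated pullback metric equals the original one, via $Q_i^\top Q_i = I$ in the Jacobian product, and then concludes that the integrands, and hence the integrals, coincide.
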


\vspace{-5pt}

This invariance to rotations can be understood intuitively: given two tangent vectors $\mathbf{v}_1, \mathbf{v}_2\in T_p\mathcal{M}$, composing the neural network functions with an orthogonal matrix $Q$ (or any Euclidean isometry), will simply rotate the images of each $\mathbf{v}_i$ under the pushforward, and thus will not change their dot product. Since the pullback metric is defined via this dot product, the intrinsic geometry remains the same. 

A more subtle question, especially crucial for any method relying on Riemannian geometry, is whether MSA depends on the choice of local coordinates. In our numerical experiments we chose particular coordinates on the manifold aligned with the task variables. This in turn defined a coordinate representation of the metrics as matrices $G^{\varphi^1}(p)$, $G^{\varphi^2}(p)$ whose generalized eigenvalues we could then compute. However, changing the local coordinates will change these matrices, and therefore the solution to the generalised eigenvalue problem. Despite this, MSA itself remains invariant under such a change of local coordinates.

\begin{proposition}[MSA is invariant to changes in local coordinates]\label{thm:coordinate_invariance}
    Consider two networks $\varphi^1:\mathbb{R}^{n_\mathrm{in}}\rightarrow\mathbb{R}^{n_1}$, $\varphi^2:\mathbb{R}^{n_\mathrm{in}}\rightarrow\mathbb{R}^{n_2}$. Let $(U_j,\phi_j)$ and  $(U_k,\phi_k)$ be smooth charts on $\mathcal{M}$. Then:
    \[
        d_\msa(\varphi^1\circ \psi \circ f, \varphi^2 \circ \psi \circ f) = d_\msa(\varphi^1, \varphi^2)
    \]
    where $f:\chart_k (U_k)\rightarrow \chart_j(U_j)$ is a diffeomorphism between the local coordinate chart domains.
    ~\hfill \hyperref[proof:coordinate_invariance]{\sc\small [Proof]}
\end{proposition}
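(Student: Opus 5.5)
The plan is to prove the invariance pointwise for the spectral ratio and then transfer it through the integral using the change-of-variables formula. Let $x = f(y)$ with $y \in \chart_k(U_k)$ and $x \in \chart_j(U_j)$, and let $Df(y)$ denote the (invertible) Jacobian of $f$. By the chain rule, the Jacobian of $\varphi^i\circ\psi\circ f$ at $y$ is $J^i(f(y))\,Df(y)$, where $J^i$ is the Jacobian of $\varphi^i\circ\psi$ in the $\chart_j$ coordinates. Hence the pullback metrics transform by congruence:
\[
\tilde G^i(y) = Df(y)^\top G^i(f(y))\, Df(y), \qquad i=1,2.
\]

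The key algebraic step is that the spectral ratio is invariant under a simultaneous congruence $G\mapsto A^\top G A$, $G'\mapsto A^\top G' A$ with $A$ invertible. Suppose $G\mathbf{v}=\lambda G'\mathbf{v}$, and set $\mathbf{w}=A^{-1}\mathbf{v}$. Then
\[
A^\top G A\mathbf{w} = A^\top G\mathbf{v} = \lambda A^\top G'\mathbf{v} = \lambda A^\top G' A\mathbf{w}.
\]
So the generalised eigenvalues coincide; equivalently, both pairs have the same spectrum as $G'^{-1}G$ up to similarity. In particular $\lambda_1$ and $\lambda_m$ are unchanged, and therefore
\[
d_\sr\bigl(\tilde G^1(y),\tilde G^2(y)\bigr) = d_\sr\bigl(G^1(f(y)),G^2(f(y))\bigr).
\]
This is the affine-invariance property alluded to in the appendix remark.

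It remains to show that the integral and the normalisation are unchanged. The integrand is now a scalar function on $\mathcal{M}$ that does not depend on the chart. The measure $\mathrm{dvol}_g$ is intrinsic to $(\mathcal{M},g)$: in coordinates it reads $\sqrt{\det g_y}\,dy$, and the standard change of variables gives $\sqrt{\det (Df^\top g\, Df)}\,dy = \sqrt{\det g}\,|\det Df|\,dy = \sqrt{\det g}\,dx$. Therefore both $\int d_\sr\,\mathrm{dvol}_g$ and $\mathrm{Vol}_g(\mathcal{M})$ agree when computed in either chart, and their ratio gives the claim.

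I expect the main obstacle to be the interpretation of the measure rather than the algebra. If $g$ is taken to transform as a metric, the argument above applies directly. If instead the integral is read as Lebesgue measure in chart coordinates, the Jacobian factor $|\det Df|$ would break invariance. I would therefore state explicitly that $g$ is a fixed Riemannian metric on $\mathcal{M}$ pulled back along $f$. For a manifold covered by several charts, I would pass through a partition of unity subordinate to $\{U_j\}$, which reduces the argument to the single-chart case.
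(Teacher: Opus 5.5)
Your proposal is correct and follows the paper's argument. The pullback metrics transform by the simultaneous congruence $G\mapsto J^\top G J$, with $J$ the Jacobian of the coordinate change, and this leaves the generalised eigenvalues, hence $d_\sr$, unchanged; your further check that the normalised integral is chart-independent, because $\mathrm{dvol}_g$ picks up exactly the factor $|\det Df|$, correctly fills in a step the paper's proof leaves implicit.
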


These two key properties ensure that that MSA provides a meaningful distance between neural representations.

\section{Discussion}

We introduced a new similarity measure for comparing neural network solutions through their intrinsic geometries. At its most general, MSA defines a distance metric between Riemannian metrics over a manifold. In order to compare neural computations, we consider neural networks as functions that progressively warp an input manifold, whether from layer to layer or through time. Applying MSA to the pullback metrics of each network then provides a mathematically principled way to quantify the similarity of the networks' internal computations, rather than their extrinsic embedding geometries. We demonstrated the effectiveness of our approach in several examples of varying architecture, for static, dynamical, and generative models. 

MSA requires a characterisation of the input data manifold. While this is straightforward to implement for most models, such a characterization can be challenging for applications to data where the manifold is not explicitly provided, e.g. in neuroscience data \citep{barbosa2025quantifying}. This can be  remedied by learning the manifold with a parametric model before applying MSA, as long as the data is sufficiently densely sampled. On the other hand, sampling-based methods such as Procrustes analysis and CCA work well even for sparsely sampled data. To this end, a natural extension of MSA to the data-poor regime would be to introduce a probabilistic model on the data sampling process.

A second limitation of MSA is that it does not capture how representations are used downstream. By focusing on the intrinsic geometry of a single hidden layer, MSA is agnostic as to how that information is exploited (or not) in later layers. For example, when the decoder layer is rank-deficient or low-dimensional, it may ignore part of the representational geometry, depending on how it is aligned with the nullspace. In contrast, several shape metrics have known connections with linear decoding \citep{harvey2024representational}. Therefore a more complete characterization of network computations will likely require additional considerations of decoding. 

Finally, our work remains mostly correlational. Similarity metrics do not offer causal insights into neural networks, only a geometric lens through which their computations can be interpreted. Nevertheless, similarity analysis can provide a useful tool for refining and manipulating models, for example, to identify new solutions \citep{qian2025discovering}.  Future work could attempt to directly manipulate the geometry of neural networks based on the insights gained from MSA to design more performant or robust models, offering a promising avenue for mechanistic interpretability. 

\clearpage

\section*{Acknowledgments}

We thank the Gatsby Unit for their feedback on this work. In particular, we are grateful to David O’Neill, whose results obtained during his Master’s at UCL inspired the development of MSA. We also thank the members of the Centre Sciences des Données at ENS for helpful discussions, especially Alexandre Vérine for insightful exchanges on the diffusion model section. This work was supported by the Agence National de Recherche (ANR-23-IACL-0008, ANR-17- 726 EURE-0017).


\printbibliography

@article{amari1998natural,
  title={Natural gradient works efficiently in learning},
  author={Amari, Shun-Ichi},
  journal={Neural computation},
  volume={10},
  number={2},
  pages={251--276},
  year={1998},
  publisher={MIT Press}
}

@article{arvanitidis2017latent,
  title={Latent space oddity: on the curvature of deep generative models. arXiv},
  author={Arvanitidis, Georgios and Hansen, Lars Kai and Hauberg, S{\o}ren},
  journal={Preprint posted online October},
  volume={31},
  year={2017}
}

@article{barbosa2025quantifying,
  title={Quantifying Differences in Neural Population Activity With Shape Metrics},
  author={Barbosa, Joao and Nejatbakhsh, Amin and Duong, Lyndon and Harvey, Sarah E and Brincat, Scott L and Siegel, Markus and Miller, Earl K and Williams, Alex H},
  journal={bioRxiv},
  pages={2025--01},
  year={2025},
  publisher={Cold Spring Harbor Laboratory}
}

@article{boix2022gulp,
  title={GULP: a prediction-based metric between representations},
  author={Boix-Adsera, Enric and Lawrence, Hannah and Stepaniants, George and Rigollet, Philippe},
  journal={Advances in Neural Information Processing Systems},
  volume={35},
  pages={7115--7127},
  year={2022}
}

@article{brandon2025emergent,
  title={Emergent Riemannian geometry over learning discrete computations on continuous manifolds},
  author={Brandon, Julian and Chadwick, Angus and Pellegrino, Arthur},
  journal={arXiv preprint arXiv:2512.00196},
  year={2025}
}

@article{chen2024dform,
  title={Dform: Diffeomorphic vector field alignment for assessing dynamics across learned models},
  author={Chen, Ruiqi and Vedovati, Giacomo and Braver, Todd and Ching, ShiNung},
  journal={arXiv preprint arXiv:2402.09735},
  year={2024}
}

@inproceedings{choufeature,
  title={Feature Learning beyond the Lazy-Rich Dichotomy: Insights from Representational Geometry},
  author={Chou, Chi-Ning and Le, Hang and Wang, Yichen and Chung, SueYeon},
  booktitle={Forty-second International Conference on Machine Learning},
year={2025}
}

@article{chizat2019lazy,
  title={On lazy training in differentiable programming},
  author={Chizat, Lenaic and Oyallon, Edouard and Bach, Francis},
  journal={Advances in neural information processing systems},
  volume={32},
  year={2019}
}

@article{durstewitz2023reconstructing,
  title={Reconstructing computational system dynamics from neural data with recurrent neural networks},
  author={Durstewitz, Daniel and Koppe, Georgia and Thurm, Max Ingo},
  journal={Nature Reviews Neuroscience},
  volume={24},
  number={11},
  pages={693--710},
  year={2023},
  publisher={Nature Publishing Group UK London}
}

@incollection{forstner2003metric,
  title={A metric for covariance matrices},
  author={F{\"o}rstner, Wolfgang and Moonen, Boudewijn},
  booktitle={Geodesy-the Challenge of the 3rd Millennium},
  pages={299--309},
  year={2003},
  publisher={Springer}
}

@article{ganguli2008memory,
  title={Memory traces in dynamical systems},
  author={Ganguli, Surya and Huh, Dongsung and Sompolinsky, Haim},
  journal={Proceedings of the national academy of sciences},
  volume={105},
  number={48},
  pages={18970--18975},
  year={2008},
  publisher={National Academy of Sciences}
}

@article{goldman2009memory,
  title={Memory without feedback in a neural network},
  author={Goldman, Mark S},
  journal={Neuron},
  volume={61},
  number={4},
  pages={621--634},
  year={2009},
  publisher={Elsevier}
}

@article{gosztolai2025marble,
  title={MARBLE: interpretable representations of neural population dynamics using geometric deep learning},
  author={Gosztolai, Adam and Peach, Robert L and Arnaudon, Alexis and Barahona, Mauricio and Vandergheynst, Pierre},
  journal={Nature Methods},
  pages={1--9},
  year={2025},
  publisher={Nature Publishing Group US New York}
}

@inproceedings{grave2019unsupervised,
  title={Unsupervised alignment of embeddings with wasserstein procrustes},
  author={Grave, Edouard and Joulin, Armand and Berthet, Quentin},
  booktitle={The 22nd International Conference on Artificial Intelligence and Statistics},
  pages={1880--1890},
  year={2019},
  organization={PMLR}
}

@article{gruffaz2025riemannian,
  title={Riemannian metric learning: Closer to you than you imagine},
  author={Gruffaz, Samuel and Sassen, Josua},
  journal={arXiv preprint arXiv:2503.05321},
  year={2025}
}

@article{harvey2024representational,
  title={What representational similarity measures imply about decodable information},
  author={Harvey, Sarah E and Lipshutz, David and Williams, Alex H},
  journal={arXiv preprint arXiv:2411.08197},
  year={2024}
}

@article{hauser2017principles,
  title={Principles of Riemannian geometry in neural networks},
  author={Hauser, Michael and Ray, Asok},
  journal={Advances in neural information processing systems},
  volume={30},
  year={2017}
}

@article{gu2022train,
  title={How to train your hippo: State space models with generalized orthogonal basis projections},
  author={Gu, Albert and Johnson, Isys and Timalsina, Aman and Rudra, Atri and R{\'e}, Christopher},
  journal={arXiv preprint arXiv:2206.12037},
  year={2022}
}

@article{ho2022classifier,
  title={Classifier-free diffusion guidance},
  author={Ho, Jonathan and Salimans, Tim},
  journal={arXiv preprint arXiv:2207.12598},
  year={2022}
}

@article{huang2025inputdsa,
  title={Inputdsa: Demixing then comparing recurrent and externally driven dynamics},
  author={Huang, Ann and Ostrow, Mitchell and Singh, Satpreet H and Kozachkov, Leo and Fiete, Ila and Rajan, Kanaka},
  journal={arXiv preprint arXiv:2510.25943},
  year={2025}
}

@article{jacot2018neural,
  title={Neural tangent kernel: Convergence and generalization in neural networks},
  author={Jacot, Arthur and Gabriel, Franck and Hongler, Cl{\'e}ment},
  journal={Advances in neural information processing systems},
  volume={31},
  year={2018}
}

@article{johnston2023abstract,
  title={Abstract representations emerge naturally in neural networks trained to perform multiple tasks},
  author={Johnston, W Jeffrey and Fusi, Stefano},
  journal={Nature Communications},
  volume={14},
  number={1},
  pages={1040},
  year={2023},
  publisher={Nature Publishing Group UK London}
}

@article{kaul2019riemannian,
  title={Riemannian curvature of deep neural networks},
  author={Kaul, Piyush and Lall, Brejesh},
  journal={IEEE transactions on neural networks and learning systems},
  volume={31},
  number={4},
  pages={1410--1416},
  year={2019},
  publisher={IEEE}
}

@article{khona2022attractor,
  title={Attractor and integrator networks in the brain},
  author={Khona, Mikail and Fiete, Ila R},
  journal={Nature Reviews Neuroscience},
  volume={23},
  number={12},
  pages={744--766},
  year={2022},
  publisher={Nature Publishing Group UK London}
}

@article{klabunde2025similarity,
  title={Similarity of neural network models: A survey of functional and representational measures},
  author={Klabunde, Max and Schumacher, Tobias and Strohmaier, Markus and Lemmerich, Florian},
  journal={ACM Computing Surveys},
  volume={57},
  number={9},
  pages={1--52},
  year={2025},
  publisher={ACM New York, NY}
}

@inproceedings{kornblith2019similarity,
  title={Similarity of neural network representations revisited},
  author={Kornblith, Simon and Norouzi, Mohammad and Lee, Honglak and Hinton, Geoffrey},
  booktitle={International conference on machine learning},
  pages={3519--3529},
  year={2019},
  organization={PMlR}
}

@article{kriegeskorte2008representational,
  title={Representational similarity analysis-connecting the branches of systems neuroscience},
  author={Kriegeskorte, Nikolaus and Mur, Marieke and Bandettini, Peter A},
  journal={Frontiers in systems neuroscience},
  volume={2},
  pages={249},
  year={2008},
  publisher={Frontiers}
}

@incollection{lee2003smooth,
  title={Smooth manifolds},
  author={Lee, John M},
  booktitle={Introduction to smooth manifolds},
  pages={1--29},
  year={2003},
  publisher={Springer}
}

@article{li2015convergent,
  title={Convergent learning: Do different neural networks learn the same representations?},
  author={Li, Yixuan and Yosinski, Jason and Clune, Jeff and Lipson, Hod and Hopcroft, John},
  journal={arXiv preprint arXiv:1511.07543},
  year={2015}
}

@article{li2024representations,
  title={Representations and generalization in artificial and brain neural networks},
  author={Li, Qianyi and Sorscher, Ben and Sompolinsky, Haim},
  journal={Proceedings of the National Academy of Sciences},
  volume={121},
  number={27},
  pages={e2311805121},
  year={2024},
  publisher={National Academy of Sciences}
}

@article{lin2019riemannian,
  title={Riemannian geometry of symmetric positive definite matrices via Cholesky decomposition},
  author={Lin, Zhenhua},
  journal={SIAM Journal on Matrix Analysis and Applications},
  volume={40},
  number={4},
  pages={1353--1370},
  year={2019},
  publisher={SIAM}
}

@article{maheswaranathan2019universality,
  title={Universality and individuality in neural dynamics across large populations of recurrent networks},
  author={Maheswaranathan, Niru and Williams, Alex and Golub, Matthew and Ganguli, Surya and Sussillo, David},
  journal={Advances in neural information processing systems},
  volume={32},
  year={2019}
}

@article{mao2024training,
  title={The training process of many deep networks explores the same low-dimensional manifold},
  author={Mao, Jialin and Griniasty, Itay and Teoh, Han Kheng and Ramesh, Rahul and Yang, Rubing and Transtrum, Mark K and Sethna, James P and Chaudhari, Pratik},
  journal={Proceedings of the National Academy of Sciences},
  volume={121},
  number={12},
  pages={e2310002121},
  year={2024},
  publisher={National Academy of Sciences}
}

@article{ostrow2023beyond,
  title={Beyond geometry: Comparing the temporal structure of computation in neural circuits with dynamical similarity analysis},
  author={Ostrow, Mitchell and Eisen, Adam and Kozachkov, Leo and Fiete, Ila},
  journal={Advances in Neural Information Processing Systems},
  volume={36},
  pages={33824--33837},
  year={2023}
}

@article{paccolat2021geometric,
  title={Geometric compression of invariant manifolds in neural networks},
  author={Paccolat, Jonas and Petrini, Leonardo and Geiger, Mario and Tyloo, Kevin and Wyart, Matthieu},
  journal={Journal of Statistical Mechanics: Theory and Experiment},
  volume={2021},
  number={4},
  pages={044001},
  year={2021},
  publisher={IOP Publishing}
}

@article{park2023understanding,
  title={Understanding the latent space of diffusion models through the lens of riemannian geometry},
  author={Park, Yong-Hyun and Kwon, Mingi and Choi, Jaewoong and Jo, Junghyo and Uh, Youngjung},
  journal={Advances in Neural Information Processing Systems},
  volume={36},
  pages={24129--24142},
  year={2023}
}

@article{pellegrino2025rnns,
  title={RNNs perform task computations by warping neural representations},
  author={Pellegrino, Arthur and Chadwick, A},
  journal={Advances in neural information processing systems},
  volume={38},
  year={2025}
}

@article{podell2023sdxl,
  title={Sdxl: Improving latent diffusion models for high-resolution image synthesis},
  author={Podell, Dustin and English, Zion and Lacey, Kyle and Blattmann, Andreas and Dockhorn, Tim and M{\"u}ller, Jonas and Penna, Joe and Rombach, Robin},
  journal={arXiv preprint arXiv:2307.01952},
  year={2023}
}

@article{poole2016exponential,
  title={Exponential expressivity in deep neural networks through transient chaos},
  author={Poole, Ben and Lahiri, Subhaneil and Raghu, Maithra and Sohl-Dickstein, Jascha and Ganguli, Surya},
  journal={Advances in neural information processing systems},
  volume={29},
  year={2016}
}

@article{qian2025discovering,
  title={Discovering alternative solutions beyond the simplicity bias in recurrent neural networks},
  author={Qian, William and Pehlevan, Cengiz},
  journal={arXiv preprint arXiv:2509.21504},
  year={2025}
}

@article{raghu2017svcca,
  title={Svcca: Singular vector canonical correlation analysis for deep learning dynamics and interpretability},
  author={Raghu, Maithra and Gilmer, Justin and Yosinski, Jason and Sohl-Dickstein, Jascha},
  journal={Advances in neural information processing systems},
  volume={30},
  year={2017}
}

@article{ritter2025efficient,
  title={Efficient Working Memory Maintenance via High-Dimensional Rotational Dynamics},
  author={Ritter, Laura and Chadwick, Angus},
  journal={bioRxiv},
  pages={2025--09},
  year={2025},
  publisher={Cold Spring Harbor Laboratory}
}

@article{ryoo2025generalizable,
  title={Generalizable, real-time neural decoding with hybrid state-space models},
  author={Ryoo, Avery Hee-Woon and Krishna, Nanda H and Mao, Ximeng and Azabou, Mehdi and Dyer, Eva L and Perich, Matthew G and Lajoie, Guillaume},
  journal={arXiv preprint arXiv:2506.05320},
  year={2025}
}

@article{sagodi2025dynamical,
  title={Dynamical archetype analysis: Autonomous computation},
  author={Sagodi, Abel and Park, Il Memming},
  journal={arXiv preprint arXiv:2507.05505},
  year={2025}
}

@article{saxe2019mathematical,
  title={A mathematical theory of semantic development in deep neural networks},
  author={Saxe, Andrew M and McClelland, James L and Ganguli, Surya},
  journal={Proceedings of the National Academy of Sciences},
  volume={116},
  number={23},
  pages={11537--11546},
  year={2019},
  publisher={National Academy of Sciences}
}

@article{saxe2013exact,
  title={Exact solutions to the nonlinear dynamics of learning in deep linear neural networks},
  author={Saxe, Andrew M and McClelland, James L and Ganguli, Surya},
  journal={arXiv preprint arXiv:1312.6120},
  year={2013}
}

@article{shahbazi2021using,
  title={Using distance on the Riemannian manifold to compare representations in brain and in models},
  author={Shahbazi, Mahdiyar and Shirali, Ali and Aghajan, Hamid and Nili, Hamed},
  journal={NeuroImage},
  volume={239},
  pages={118271},
  year={2021},
  publisher={Elsevier}
}

@article{shang2025unraveling,
  title={Unraveling the geometry of visual relational reasoning},
  author={Shang, Jiaqi and Kreiman, Gabriel and Sompolinsky, Haim},
  journal={ArXiv},
  pages={arXiv--2502},
  year={2025}
}

@inproceedings{shao2018riemannian,
  title={The riemannian geometry of deep generative models},
  author={Shao, Hang and Kumar, Abhishek and Thomas Fletcher, P},
  booktitle={Proceedings of the IEEE Conference on Computer Vision and Pattern Recognition Workshops},
  pages={315--323},
  year={2018}
}

@article{tenenbaum2000global,
  title={A global geometric framework for nonlinear dimensionality reduction},
  author={Tenenbaum, Joshua B and Silva, Vin de and Langford, John C},
  journal={science},
  volume={290},
  number={5500},
  pages={2319--2323},
  year={2000},
  publisher={American Association for the Advancement of Science}
}

@article{tennenholtz2022uncertainty,
  title={Uncertainty estimation using riemannian model dynamics for offline reinforcement learning},
  author={Tennenholtz, Guy and Mannor, Shie},
  journal={Advances in Neural Information Processing Systems},
  volume={35},
  pages={19008--19021},
  year={2022}
}

@article{wang2018towards,
  title={Towards understanding learning representations: To what extent do different neural networks learn the same representation},
  author={Wang, Liwei and Hu, Lunjia and Gu, Jiayuan and Hu, Zhiqiang and Wu, Yue and He, Kun and Hopcroft, John},
  journal={Advances in neural information processing systems},
  volume={31},
  year={2018}
}

@article{williams2021generalized,
  title={Generalized shape metrics on neural representations},
  author={Williams, Alex H and Kunz, Erin and Kornblith, Simon and Linderman, Scott},
  journal={Advances in neural information processing systems},
  volume={34},
  pages={4738--4750},
  year={2021}
}

@article{williams2024equivalence,
  title={Equivalence between representational similarity analysis, centered kernel alignment, and canonical correlations analysis},
  author={Williams, Alex H},
  journal={bioRxiv},
  pages={2024--10},
  year={2024},
  publisher={Cold Spring Harbor Laboratory}
}

@inproceedings{woodworth2020kernel,
  title={Kernel and rich regimes in overparametrized models},
  author={Woodworth, Blake and Gunasekar, Suriya and Lee, Jason D and Moroshko, Edward and Savarese, Pedro and Golan, Itay and Soudry, Daniel and Srebro, Nathan},
  booktitle={Conference on Learning Theory},
  pages={3635--3673},
  year={2020},
  organization={PMLR}
}

@inproceedings{zhangkoopstd,
  title={KoopSTD: Reliable Similarity Analysis between Dynamical Systems via Approximating Koopman Spectrum with Timescale Decoupling},
  author={Zhang, Shimin and Ye, Ziyuan and Yan, Yinsong and Song, Zeyang and Wu, Yujie and Wu, Jibin},
  booktitle={Forty-second International Conference on Machine Learning}
}

\clearpage

\appendix

\renewcommand{\thesection}{A\arabic{section}}
\renewcommand{\thesubsection}{A\arabic{section}.\arabic{subsection}}

\renewcommand{\thefigure}{A\arabic{figure}}
\renewcommand{\thetable}{A\arabic{table}}

\renewcommand{\theequation}{A\arabic{equation}}

\renewcommand{\thetheorem}{A\arabic{theorem}}

\onecolumn

{\LARGE\sc Appendix \par}

\section{Metric similarity analysis} \label{app:maths}

\subsection*{Pseudodistance function}

\begin{proof}[Proof of proposition \ref{thm:sr_distance}]
    Let $G, G', G''\in\mathbb{R}^{m\times m}$ be positive definite matrices. We will denote $\lambda_i^{(G,G')}$ to be the $i$th generalized eigenvalue of $G$ and $G'$ with corresponding eigenvector $\mathbf{v}_i^{(G,G')}$, satisfying: 
    \[
        G \mathbf{v}_i^{(G,G')} = \lambda_i^{(G,G')} G' \mathbf{v}_i^{(G,G')}.
    \]
    By convention we order the eigenvalues from largest to smallest: 
    \[
        \lambda_1^{(G,G')} \geq \dots \geq \lambda_m^{(G,G')}>0.
    \]
    A useful fact will be that the generalized eigenvalues of $(G,G')$ and $(AG,AG')$ are equal for any nonsingular $A \in \mathbb{R}^{m\times m}$. This follows from multiplying the equation above by $A$, i.e.: $A G \mathbf{v}_i^{(G,G')} = \lambda_i^{(G,G')} A G'  \mathbf{v}_i^{(G,G')}$, therefore:
    \[
        \lambda_i^{(AG,AG')} = \lambda_i^{(G,G')} \quad \Rightarrow \quad d_\sr(AG,AG') = d_\sr(G,G')
    \] 
    In particular, if we choose $A = (G')^{-1} $, we obtain the standard eigenvalue equation for the matrix $(G')^{-1} G$ with $\lambda_i^{(G,G')} = \lambda_i^{((G')^{-1} G,\mathbb{I})}$.
    
    \begin{enumerate}[left=0pt]
        \item \textit{Separation:} In this case, $\lambda_i^{(G,G)} = \lambda_i^{(G^{-1}G,\mathbb{I})} = \lambda_i^{(\mathbb{I},\mathbb{I})} = 1$. Therefore, $d_\sr(G, G) = 1 - \sqrt{\frac{1}{1}} = 0$.
        \item \textit{Symmetry:} Each generalized eigenvalue for $(G,G')$ corresponds to a generalized eigenvalue for $(G',G)$. Since the eigenvalues are nonzero:
        \[
        G \mathbf{v}_i^{(G,G')} = \lambda_i^{(G,G')} G' \mathbf{v}_i^{(G,G')} \quad \Rightarrow \quad
            \frac{1}{\lambda_i^{(G,G')}} G \mathbf{v}_i^{(G,G')} =  G'\mathbf{v}_i^{(G,G')}
        \]
        Since we define the generalized eigenvalues in decreasing order of magnitude, this means $\lambda_i^{(G',G)} = 1/(\lambda_{m-i+1}^{(G,G')})$. Thus,
        \[
            d_\sr(G',G) = 1- \sqrt{\frac{\lambda_m^{(G',G)}}{\lambda_1^{(G',G)}}} = 1- \sqrt{\frac{1/(\lambda_1^{(G,G')})}{1/(\lambda_m^{(G,G')})}} = 1- \sqrt{\frac{ \lambda_m^{(G,G')} }{ \lambda_1^{(G,G')} }} = d_\sr(G,G').
        \]
        \item \textit{Triangle inequality:} We start by defining $\Delta d_\sr =  d_\sr(G,G') + d_\sr(G',G'') - d_\sr(G,G'') $. It then suffices to prove that $\Delta d_\sr \geq 0$. So far we have shown that the spectral ratio is symmetric and invariant under left-multiplication by invertible matrices. Using these properties, we can rewrite:
        \begin{align*}
            \Delta d_\sr &= d_\sr(G,G') + d_\sr(G'',G') - d_\sr(G,G'') \\
            &= d_\sr(\underbrace{(G')^{-1}G}_{:=\tilde G},\mathbb{I}) + d_\sr(\underbrace{(G')^{-1}G''}_{:=\tilde G''},\mathbb{I})  - d_\sr(\underbrace{(G')^{-1}G}_{:=\tilde G},\underbrace{(G')^{-1}G''}_{:=\tilde G''})
        \end{align*}
        To prove that $\Delta d_\sr \geq 0$ we will need to understand how the standard eigenvalues of $\tilde G$ and $\tilde G''$ relate to their generalized eigenvalues. For clarity we write $\mathbf{v}_i = \mathbf{v}_i^{(\tilde G,\tilde G'')}$ for the remainder of the proof. Then, we can write the generalized eigenvalues as:
        \begin{align*}
            \tilde G \mathbf{v}_i = \lambda_i^{(\tilde G, \tilde G'')} \tilde G'' \mathbf{v}_i \quad \Rightarrow \quad \lambda_i^{(\tilde G, \tilde G'')} = \frac{\mathbf{v}_i^\top \tilde G \mathbf{v}_i}{\mathbf{v}_i^\top \tilde G'' \mathbf{v}_i} = \frac{R(\tilde G,\mathbf{v}_i)}{R(\tilde G'',\mathbf{v}_i)}
        \end{align*}
        where $R(A,\mathbf{v})$ is the Rayleigh quotient.  Then, the spectral ratio distance can be written in terms of Rayleigh quotients:
        \begin{align*}
            d_\sr(\tilde G, \tilde G'') &= 1 - \sqrt{ \frac{R(\tilde G,\mathbf{v}_m)}{R(\tilde G'',\mathbf{v}_m)} \cdot \frac{R(\tilde G'',\mathbf{v}_1)}{R(\tilde G,\mathbf{v}_1)} } 
        \end{align*}
        Recall that the Rayleigh quotient is bounded by above and below by the standard eigenvalues: $\lambda_m^{(A,\mathbb{I})} \leq R(A,\mathbf{v}) \leq \lambda_1^{(A,\mathbb{I})}$ for any $\mathbf{v}$. We can use this to bound the ratio of Rayleigh quotients:
        \begin{align*}
             \frac{R(\tilde G,\mathbf{v}_m)}{R(\tilde G'',\mathbf{v}_m)} \cdot \frac{R(\tilde G'',\mathbf{v}_1)}{R(\tilde G,\mathbf{v}_1)}  \geq  \frac{\lambda_m^{(\tilde G,\mathbb{I})}}{\lambda_1^{(\tilde G'',\mathbb{I})}} \cdot \frac{\lambda_m^{(\tilde G'',\mathbb{I})}}{\lambda_1^{(\tilde G,\mathbb{I})}}= \frac{b}{a}\cdot\frac{b''}{a''}
        \end{align*}
        where we have defined the following quantities for clarity:
        \begin{align*}
            a&=\sqrt{\lambda_1^{(\tilde G,\mathbb{I})}},\phantom{,} \quad b=\sqrt{\lambda_m^{(\tilde G,\mathbb{I})}}\\
            a''&=\sqrt{\lambda_1^{(\tilde G'',\mathbb{I})}}, \quad b''=\sqrt{\lambda_m^{(\tilde G'',\mathbb{I})}}
        \end{align*}
        Putting it all together, we can write (or bound) all three terms in $\Delta d_\sr$ as: 
        \begin{align*}
            d_\sr(\tilde G,\mathbb{I})&=1-\frac{b}{a} \\
            d_\sr(\tilde G,\mathbb{I})&=1-\frac{b''}{a''} \\
            d_\sr(\tilde G, \tilde G'') &\leq 1 -\frac{b}{a}\cdot\frac{b''}{a''}
        \end{align*}
        Finally, since $b/a \leq 1$ and $b''/a'' \leq 1$, 
        \begin{align*}
            0 & \leq \left( 1 - \frac{b}{a} \right)\left( 1 - \frac{b''}{a''} \right)\\
            & = 1 - \frac{b}{a} - \frac{b''}{a''} + \frac{b}{a}\cdot \frac{b''}{a''}\\
            & = \left( 1 - \frac{b}{a}\right) + \left (1 - \frac{b''}{a''}\right) - \left(1 - \frac{b}{a}\cdot \frac{b''}{a''}\right) \\
            &\leq d_\sr(\tilde G,\mathbb{I}) + d_\sr(\tilde G'',\mathbb{I}) + d_\sr(\tilde G,\tilde G'') = \Delta d_\sr,
        \end{align*}
        which concludes the proof.
        
        \end{enumerate}
            
\end{proof}

\begin{proof}[Proof of proposition \ref{thm:msa_distance}]
    The \textit{separation} and \textit{symmetry} immediately follow from Proposition \ref{thm:sr_distance} and the linearity of integration over a manifold. As for the \textit{triangle inequality}:
    \begin{align*}
        \Delta d_\msa &= d_\msa(\varphi^1, \varphi^2) + d_\msa(\varphi^2, \varphi^3) - d_\msa(\varphi^1, \varphi^3) \\
        &= \frac{1}{v}\int_\mathcal{M} d_\sr(G^{\varphi^1}, G^{\varphi^2}) + d_\sr(G^{\varphi^2}, G^{\varphi^3}) - d_\sr(G^{\varphi^1}, G^{\varphi^3})dV(p)\\
        &=\frac{1}{v}\int_\mathcal{M} \Delta d_\sr(p) dV(p).
    \end{align*}
    In the proof of Proposition \ref{thm:sr_distance} we show that $\Delta d_\sr\geq 0$, and the integral of a non-negative number is itself non-negative. Hence:
    \[
        0\leq \Delta d_\msa \quad \Longleftrightarrow \quad d_\msa(\varphi^1, \varphi^3)\leq d_\msa(\varphi^1, \varphi^2) + d_\msa(\varphi^2, \varphi^3).
    \]
\end{proof}

\subsection*{Invariances properties}

\begin{proof}[Proof of proposition \ref{thm:rotation_invariance}]\label{proof:rotation_invariance}
    It suffices to show that the metric itself is invariant to state-space rotation. We consider a network $\varphi^i:\mathbb{R}^{n_\text{in}}\rightarrow\mathbb{R}^{n_i}$ with $m$-dimensional input manifold $\mathcal{M}$ embedded through the mapping $\psi:\mathcal{M} \to \mathbb{R}^{n_\text{in}}$. Let $(U,\chart)$ be a smooth chart on $\mathcal{M}$ where $U \subset \mathcal{M}$ is an open set with $\chart: U \to \chart(U) \subseteq \mathbb{R}^m$ a diffeomorphism. The pullback metric at $p\in U$ can be expressed in local coordinates as the following product of Jacobians:
\[
    G^{\varphi^i}(p) = J_{\varphi^i\circ\psi\circ\chart^{-1} }^\top J_{\varphi^i\circ\psi \circ \chart^{-1}}
    = J_{\psi\circ \chart^{-1}}^\top J_{\varphi^i}^\top J_{\varphi^i}J_{\psi\circ \chart^{-1}}
\]  
    Let $\tilde G^{\varphi^i}(p)$ be the corresponding metric after rotating in state space by $Q_i\in O(n_i)$. Then:
    \begin{align*}
        \tilde G^{\varphi^i}(p) =&  J_{Q_i\circ\varphi^i\circ\psi \circ \chart^{-1} }^\top J_{Q_i\circ\varphi^i\circ\psi \circ \chart^{-1}}\\
        =&J_{\psi \circ \chart^{-1}}^\top J_{\varphi^i}^\top Q_i^\top  Q_iJ_{\varphi^i}J_{\psi\circ \chart^{-1}}\\
        =& J_{\psi \circ \chart^{-1}}^\top J_{\varphi^i}^\top J_{\varphi^i}J_{\psi\circ \chart^{-1}} = G^{\varphi^i}(p)
    \end{align*}
    Where $J_f(p)$ is the Jacobian of the function $f$ evaluated at $p$. Thus:
    \begin{align*}
        d_\msa(Q_1\circ \varphi^1, Q_2\circ \varphi^2) &= \frac{1}{v}\int_\mathcal{M} d_\sr(\tilde G^{\varphi^1}(p), \tilde G^{\varphi^2}(p))\mathrm{d}V(p) \\&= \frac{1}{v}\int_\mathcal{M} d_\sr(G^{\varphi^1}(p), G^{\varphi^2}(p))\mathrm{d}V(p) =  d_\msa(\varphi^1, \varphi^2)
    \end{align*}
    Hence MSA is invariant to neural network hidden-layer state-space rotations. 
\end{proof}
This result can be viewed as a special case of a more general proposition: MSA is invariant to state-space transformations which are point-wise isometries with respect to the Euclidean dot product. The proof follows mutatis mutandis, with $\tilde G^{\varphi^i}(p)$ the metric under the isometry.

We note that these arguments do not rely on having networks with the same hidden layer sizes, as they can be independently rotated. We can get a stronger invariance if we assume that both neural networks are simultaneously acted upon by the same transformation. So far we have considered invariances to transformations of the hidden-layer state-space, but MSA is also invariant to simultaneous transformations of the input manifold, and in particular changes of local coordinates.

 \begin{proof}[Proof of proposition \ref{thm:coordinate_invariance}]\label{proof:coordinate_invariance}
We consider two networks $\varphi^1:\mathbb{R}^{n_\text{in}}\rightarrow\mathbb{R}^{n_1}$, $\varphi^2:\mathbb{R}^{n_\text{in}}\rightarrow\mathbb{R}^{n_2}$, with $m$-dimensional input manifold $\mathcal{M}$ embedded through the mapping $\psi:\mathcal{M} \to \mathbb{R}^{n_\text{in}}$. Let $(U_j,\chart_j)$  and  $(U_k,\chart_k)$ be two smooth charts on $\mathcal{M}$ where $U_j,U_k \subset \mathcal{M}$ are open sets with $U_j \cap U_k \neq \emptyset $ and $\chart_j: U_j \to \chart_j(U_j) \subseteq \mathbb{R}^m$, $\chart_k: U_k \to \chart_k(U_k) \subseteq \mathbb{R}^m$ are diffeomorphisms onto their images. The pullback metric at $p\in U_j$ can be expressed in local coordinates as the following product of Jacobians:
\[
    G^{\varphi^i}(p) = J_{\varphi^i\circ\psi\circ\chart_j^{-1} }^\top J_{\varphi^i\circ\psi \circ \chart_j^{-1}}
    = J_{\psi\circ \chart_j^{-1}}^\top J_{\varphi^i}^\top J_{\varphi^i}J_{\psi\circ \chart_j^{-1}}
\]
Now suppose we wish to change the local coordinates to use chart $\chart_k$ at $p \in U_j \cap U_k$ instead of $\chart_j$. We define the following transition mapping:
\[
   \chart_k(U_j \cap U_k) \xrightarrow{\chart_k^{-1}} U_j \cap U_k \xrightarrow{\chart_j} \chart_j(U_j \cap U_k)
\]
The transition function is $\chart_j \circ \chart_k^{-1}$, which maps between open subsets of $\mathbb{R}^m$, and we denote its Jacobian by $J_{kj}(p)\in\mathbb{R}^{m\times m}$. Here, unlike in the proof of proposition~\ref{thm:rotation_invariance}, the metric will change under this transformation. Denoting $\tilde G^{\varphi^i}(p)$ the new metric in local coordinates (associated with $\chart_k$) at $p$:
\[
    \tilde G^{\varphi^i}(p) = J_{\varphi^i\circ\psi\circ\chart_j^{-1} \circ \chart_j \circ \chart_k^{-1} }^\top J_{\varphi^i\circ\psi\circ\chart_j^{-1} \circ \chart_j \circ \chart_k^{-1}}
    = J_{kj}^\top J_{\psi\circ \chart_j^{-1}}^\top J_{\varphi^i}^\top J_{\varphi^i}J_{\psi\circ \chart_j^{-1}}J_{kj}
\]
which is not equal to $G^{\varphi^i}(p)$ in general. For notational clarity we will write $\mathbf{J}^i=J_{\varphi^i}J_{\psi\circ \chart_j^{-1}}$, so that 
\[
    G^{\varphi^i}(p) = (\mathbf{J}^i)^\top \mathbf{J}^i, \qquad \tilde G^{\varphi^i}(p) = J_{kj}^\top(\mathbf{J}^i)^\top \mathbf{J}^iJ_{kj}
\]
Now, the generalised eigenvalue problem in the new local coordinates becomes:
\begin{align*}
    \tilde G^{\varphi^1}(p)\mathbf{v} = \lambda \tilde G^{\varphi^2}(p)\mathbf{v} \quad \Rightarrow \quad
    J_{kj}^\top(\mathbf{J}^1)^\top \mathbf{J}^1J_{kj}\mathbf{v} =\lambda J_{kj}^\top(\mathbf{J}^2)^\top \mathbf{J}^2J_{kj}\mathbf{v}.
\end{align*}
Since the transition function is invertible, we can left-multiply by $J_{kj}^{-\top}$ to get:
\[
    (\mathbf{J}^1)^\top \mathbf{J}^1\tilde{\mathbf{v}} =\lambda (\mathbf{J}^2)^\top \mathbf{J}^2\tilde{\mathbf{v}}
\]
where we let $\tilde{\mathbf{v}} = J_{kj}\mathbf{v}$. Thus, $\tilde{\mathbf{v}}/\|\tilde{\mathbf{v}}\|$ is a generalized eigenvector for the original local coordinates, with generalized eigenvalue $\lambda$. This tells us although the generalised eigenvectors may change under local coordinates, the generalised eigenvalues don't. Since the spectral ratio is only dependent on the generalised eigenvalues, MSA is independent of local coordinate changes.
 \end{proof}

\end{document}